\newenvironment{proof}{\par\noindent{\bf Proof\ }}{\hfill\BlackBox\\[2mm]}
\newtheorem{theorem}{Theorem}
\newtheorem{lemma}[theorem]{Lemma}
\newtheorem{proposition}[theorem]{Proposition}
\newtheorem{remark}{Remark}
\newcommand{\thetav}{{\boldsymbol \theta}}
\newcommand{\RN}[1]{%
	\textup{\lowercase\expandafter{\it \romannumeral#1}}%
}
\def\KL{\textsf{KL}} 
\def\InvGamma{\textsf{Inv-Gamma}} 
\icmltitlerunning{Policy Optimization as Wasserstein Gradient Flows}
\begin{document}
	
	\twocolumn[
	\icmltitle{Policy Optimization as Wasserstein Gradient Flows}
	
	
	
	
	\begin{icmlauthorlist}
		\icmlauthor{Ruiyi Zhang}{duke}
		\icmlauthor{Changyou Chen}{bu}
		\icmlauthor{Chunyuan Li}{duke}
		\icmlauthor{Lawrence Carin}{duke}
	\end{icmlauthorlist}
	
	\icmlaffiliation{duke}{Duke University}
	\icmlaffiliation{bu}{SUNY at Buffalo}
	
	\icmlcorrespondingauthor{Changyou Chen}{cchangyou@gmail.com}
	
	\icmlkeywords{Machine Learning, ICML}
	
	\vskip 0.3in
	]
	
	
	
	\printAffiliationsAndNotice{} 
	
\begin{abstract}
	Policy optimization is a core component of reinforcement learning (RL), and most existing RL methods directly optimize parameters of a policy based on maximizing the expected total reward, or its surrogate. Though often achieving encouraging empirical success, its underlying mathematical principle on {\em policy-distribution} optimization is unclear. We place policy optimization into the space of {\em probability measures}, and interpret it as Wasserstein gradient flows. On the probability-measure space, under specified circumstances, policy optimization becomes a convex problem in terms of distribution optimization. To make optimization feasible, we develop efficient algorithms by numerically solving the corresponding discrete gradient flows. Our technique is applicable to several RL settings, and is related to many state-of-the-art policy-optimization algorithms. Empirical results verify the effectiveness of our framework, often obtaining better performance compared to related algorithms.
\end{abstract}
\vspace{-0.25in}
\section{Introduction}
\label{sec:intro}

\label{sec:intro}

There is recent renewed interest in reinforcement learning \cite{sutton1998reinforcement,kaelbling1996reinforcement}, largely as a consequence of the success of deep reinforcement learning (deep RL) \cite{mnih2015human,Li:arxiv17}, which is applicable to complex environments and has obtained state-of-the-art performance on several challenging problems. In reinforcement learning an agent interacts with the environment, seeking to learn an optimal policy that yields the maximum expected reward during the interaction. Generally speaking, a policy defines a distribution over actions conditioned on the states. Learning an optimal policy corresponds to searching for an element (a conditional action distribution) on the space of distributions that yields the best expected feedback (reward) to the agent as it interacts sequentially with the environment.

A standard technique for policy learning is the policy-gradient (PG) method \cite{SuttonMSM:NIPS00}. In PG, a policy is represented in terms of  parameters, typically optimized by stochastic gradient descent (SGD) to maximize the expected total reward. A similar idea has been applied for learning deterministic policies, termed deterministic policy gradient (DPG) \cite{silver2014deterministic}. Significant progress has been made on advancing policy learning since introduction of deep learning techniques. As examples, the deep deterministic policy gradient (DDPG) method combines DPG and $Q$-learning \cite{WatkinsD:MLJ92} to jointly learn a policy and a $Q$-function for continuous control problems \cite{LillicrapHPHETSW:ICLR16}. Trust region policy optimization (TRPO) improves PG by preserving the monotonic-policy-improvement property \cite{schulman2015trust}, implemented by imposing a trust-region constraint, defined as the Kullback-Leibler (KL) divergence between consecutive policies. Later, \cite{SchulmanWDRK:arxiv17} proposed proximal policy optimization (PPO) to improve TRPO by optimizing a ``surrogate'' objective with an adaptive KL penalty and reward-clipping mechanism.
Though obtaining encouraging empirical success, many of the aforementioned algorithms optimize parameters directly, and appear to lack a rigorous interpretation in terms of distribution optimization, {\it e.g.}, it is not mathematically clear how sequentially optimizing policy parameters based on an expected-total-reward objective corresponds to optimizing the {\em distribution} of policy itself. 

In this paper we introduce gradient flows in the space of probability distributions, called {\em Wasserstein gradient flows} (WGF), and formulate policy optimization in RL as a WGF problem. Essentially, WGF induces a geometry structure (manifold) in the distribution space characterized by an {\em energy functional}. The length between elements on the manifold is defined by the second-order Wasserstein distance. Thus, searching for an optimal distribution corresponds to traveling along a gradient flow on the space until convergence. In the context of deep RL, the energy functional is characterized by the expected reward. Gradient flow corresponds to a sequence of policy distributions converging to an optimal policy during an iterative optimization procedure. From this perspective, convergence behavior of the optimization can be better understood. 

Traditional stochastic policies are limited by their simple representation ability, such as using multinomial~\cite{mnih2015human} or Gaussian policy distributions~\cite{schulman2015trust}. To overcome this issue, the proposed WGF-based stochastic policies employ general energy-based representations. To optimize the stochastic policy, we define WGFs for RL in two settings: $\RN{1})$ indirect-policy learning, defined on a distribution space for {\em parameters}; $\RN{2})$ direct-policy learning, defined on a distribution space for {\em policy distributions}. These correspond to two variants of our algorithms. The original form of the WGF problem is hard to deal with,  as it is generally infeasible to directly optimize over a distribution (an infinite-dimensional object).
To overcome this issue, based on the Jordan-Kinderlehrer-Otto (JKO) method \cite{JordanKO:MA98}, we propose a particle-based algorithm to approximate a continuous density function with particles, and derive the corresponding gradient formulas for particle updates. Our method is conceptually simple and practically efficient, which also provides a theoretically sound way to use trust-region algorithms for RL. Empirical experiments show improved performance over related reinforcement learning algorithms.

\section{Preliminaries}\vspace{-0.1cm}
We review concepts and numerical algorithms for gradient flows. We start from gradient flows on the Euclidean space, and then extend them on the space of probability measures.
\vspace{-0.2cm}
\subsection{Gradient flows on the Euclidean space}\label{sec:gf_euc}
For a smooth function\footnote{We will focus on the convex case, since this is the case for many gradient flows on the space of probability measures, as detailed subsequently.} $F: \mathbb{R}^d \rightarrow \mathbb{R}$, and a starting point $\xb_0 \in \mathbb{R}^d$, the gradient flow of $F(\xb)$ is defined as the solution of the differential equation: $\frac{\mathrm{d}\xb}{\mathrm{d}\tau} = -\nabla F(\xb(\tau))$, for time $\tau > 0$ and initial condition $\xb(0) = \xb_0$. This is a standard Cauchy problem \cite{Rulla:NA96}, endowed with a unique solution if $\nabla F$ is Lipschitz continuous. When $F$ is non-differentiable, the gradient is replaced with its subgradient, which gives a similar definition, omitted for simplicity. 
\vspace{-0.4cm}
\paragraph{Numerical solution}
An exact solution to the above gradient-flow problem is typically intractable. A standard numerical method, called the {\em Minimizing Movement Scheme} (MMS) \cite{Gobbino:AMPA99}, evolves $\xb$ iteratively for small steps along the gradient of $F$ at the current point. Denoting the current point as $\xb_k$, the next point is $\xb_{k+1} = \xb_k - \nabla F(\xb_{k+1}) h$, with stepsize $h$. Note $\xb_{k+1}$ is equivalent to solving optimization problem
$\xb_{k+1} = \arg\min_{\xb}F(\xb) + \frac{\|\xb - \xb_k\|_2^2}{2h}$, 
where $\|\cdot\|_2$ denotes the vector 2-norm. Convergence of the $\{\xb_k\}$ sequence to the exact solution has been established \cite{Ambrosio:book05}. Refer to Section~\ref{sec:gf_euc1} of the Supplementary Material (SM) for details.
\vspace{-0.5cm}
\subsection{Gradient flows on the probability-measure space}\label{sec:wgf}
By placing the optimization onto the space of probability measures, denoted $\mathcal{P}(\Omega)$ with $\Omega \subset \mathbb{R}^d$, we arrive at Wasserstein gradient flows. For a formal definition, we first endow a Riemannian geometry on $\mathcal{P}(\Omega)$. The geometry is characterized by the length between two elements (two distributions), defined by the 2nd-order Wasserstein distance:
\vspace{-0.2cm}
{\small\begin{align*}
	W_2^2(\mu, \nu) \triangleq \inf_{\gamma}\left\{\int_{\Omega \times \Omega}\|\xb - \yb\|_2^2\mathrm{d}\gamma(\xb, \yb): \gamma \in \Gamma(\mu, \nu)\right\}~,
	\end{align*}}
\!\!where $\Gamma(\mu, \nu)$ is the set of joint distributions over $(\xb, \yb)$ such that the two marginals equal $\mu$ and $\nu$, respectively. This is an optimal-transport problem, where one wants to transform $\mu$ to $\nu$ with minimum cost \cite{Villani:08}. Thus the term $\|\xb - \yb\|_2^2$ represents the cost to transport $\xb$ in $\mu$ to $\yb$ in $\nu$, and can be replaced by a general metric $c(\xb, \yb)$ in a metric space. If $\mu$ is absolutely continuous w.r.t.\! the Lebesgue measure, there is a unique optimal transport plan from $\mu$ to $\nu$, {\it i.e.}, a mapping $T: \mathbb{R}^d \rightarrow\mathbb{R}^d$ pushing $\mu$ onto $\nu$ satisfying $T_{\#}\mu = \nu$. Here $T_{\#}\mu$ denotes the pushforward measure of $\mu$. The Wasserstein distance is equivalently reformulated as
\vspace{-0.2cm}
{\small\begin{align*}
	W_2^2(\mu, \nu) \triangleq \inf_{T}\left\{\int_{\Omega}c(\xb, T(\xb))\mathrm{d}\mu(\xb)\right\}~.
	\end{align*}}
\!\!Let $\{\mu_{\tau}\}_{\tau\in[0,1]}$ be an absolutely continuous curve in $\mathcal{P}(\Omega)$ with finite second-order moments. Consider $W_2^2(\mu_{\tau}, \mu_{\tau+h})$. Motivated by the Euclidean-space case, if we define $\vb_{\tau}(\xb) \triangleq \lim_{h\rightarrow 0}\frac{T(\xb_{\tau}) - \xb_{\tau}}{h}$ as the {\em velocity of the particle}, a gradient flow can be defined on $\mathcal{P}(\Omega)$ correspondingly \cite{Ambrosio:book05}.
\vspace{-0.2cm}
\begin{lemma}\label{theo:gf_w_exist}
	Let $\{\mu_{\tau}\}_{\tau\in[0,1]}$ be an absolutely-continuous curve in $\mathcal{P}(\Omega)$ with finite second-order 
	moments. Then for a.e.\! $\tau\in [0, 1]$, the above vector field $\vb_{\tau}$ defines a gradient flow on $\mathcal{P}(\Omega)$ as $\partial_{\tau} \mu_{\tau} + \nabla \cdot (\vb_{\tau} \mu_{\tau}) = 0$, where $\nabla\cdot\ab \triangleq \nabla_{\xb}^{\top} \ab$ for a vector $\ab$.
\end{lemma}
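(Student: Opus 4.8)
The plan is to establish the stated identity in its weak (distributional) form by testing against smooth compactly supported functions, and then to recover the pointwise PDE by the arbitrariness of the test function. First I would fix $\phi \in C_c^{\infty}(\Omega)$ and examine the scalar map $\tau \mapsto \int_\Omega \phi \, \mathrm{d}\mu_\tau$. Writing $T_h$ for the optimal transport map pushing $\mu_\tau$ onto $\mu_{\tau+h}$ (which exists and is unique since each $\mu_\tau$ is absolutely continuous w.r.t.\ Lebesgue measure), the pushforward identity $(T_h)_\#\mu_\tau = \mu_{\tau+h}$ gives
$$\int_\Omega \phi \, \mathrm{d}\mu_{\tau+h} - \int_\Omega \phi \, \mathrm{d}\mu_\tau = \int_\Omega \big(\phi(T_h(\xb)) - \phi(\xb)\big)\, \mathrm{d}\mu_\tau(\xb)~.$$
A first-order Taylor expansion of $\phi$ about $\xb$ then yields
$$\phi(T_h(\xb)) - \phi(\xb) = \nabla\phi(\xb)\cdot\big(T_h(\xb) - \xb\big) + o\big(|T_h(\xb) - \xb|\big)~.$$

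Next I would divide by $h$ and send $h \to 0$. Using the definition $\vb_\tau(\xb) = \lim_{h\to 0}(T_h(\xb) - \xb)/h$, the leading term converges to $\int_\Omega \nabla\phi(\xb)\cdot\vb_\tau(\xb)\, \mathrm{d}\mu_\tau(\xb)$, so that
$$\frac{\mathrm{d}}{\mathrm{d}\tau}\int_\Omega \phi\,\mathrm{d}\mu_\tau = \int_\Omega \nabla\phi\cdot\vb_\tau \, \mathrm{d}\mu_\tau~.$$
An integration by parts on the right-hand side—legitimate because $\phi$ has compact support—rewrites this as $-\int_\Omega \phi\, \nabla\cdot(\vb_\tau\mu_\tau)$, while the left-hand side equals $\int_\Omega \phi\, \partial_\tau\mu_\tau$ in the distributional sense. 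Since $\phi$ is arbitrary, the fundamental lemma of the calculus of variations yields $\partial_\tau\mu_\tau + \nabla\cdot(\vb_\tau\mu_\tau) = 0$ for a.e.\ $\tau$, which is the claimed continuity equation.

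The main obstacle is justifying the interchange of limit and integral in the passage $h\to 0$, and absorbing the remainder term uniformly; this is where the hypotheses genuinely enter. Absolute continuity of the curve in $(\mathcal{P}(\Omega), W_2)$ guarantees that the metric derivative $|\mu'|(\tau)$ exists for a.e.\ $\tau$, and the finite-second-moment assumption places $\vb_\tau$ in $L^2(\mu_\tau)$ with $\|\vb_\tau\|_{L^2(\mu_\tau)} \le |\mu'|(\tau)$. Combined with the optimality identity $\int_\Omega |T_h(\xb)-\xb|^2\,\mathrm{d}\mu_\tau = W_2^2(\mu_\tau,\mu_{\tau+h})$ and control of the difference quotient $W_2(\mu_\tau,\mu_{\tau+h})/h$, these bounds supply the uniform integrability needed to apply dominated convergence and to discard the $o(\cdot)$ contribution. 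I expect this measure-theoretic limit passage, rather than the formal Taylor computation, to be the delicate step, and it is precisely the content developed in the optimal-transport literature \cite{Ambrosio:book05}.
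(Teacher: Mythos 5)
The paper itself gives no proof of this lemma: it is quoted directly from \cite{Ambrosio:book05} (it is essentially Theorem 8.3.1 and Proposition 8.4.6 there), and the appendix only supplies proofs for Propositions~\ref{prop:IPWGF} and~\ref{prop:dpwgf}. So your sketch is not competing with an in-paper argument but with the standard optimal-transport one, and in outline it reconstructs that argument faithfully: test against $\phi\in C_c^\infty(\Omega)$, use $(T_h)_\#\mu_\tau=\mu_{\tau+h}$ to rewrite the increment, Taylor-expand, pass to the limit, and integrate by parts to land on the weak continuity equation. You also correctly locate where the hypotheses enter, namely that absolute continuity of the curve gives an a.e.-defined metric derivative $|\mu'|(\tau)$ controlling $W_2(\mu_\tau,\mu_{\tau+h})/h$ at Lebesgue points, which is what dominates the difference quotients and kills the remainder.

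The one substantive gap is that you (following the paper's own informal definition) treat $\vb_\tau(\xb)=\lim_{h\to 0}(T_h(\xb)-\xb)/h$ as a given pointwise limit and then invoke dominated convergence. In the rigorous treatment this limit is not known to exist pointwise a priori; its existence (in $L^2(\mu_\tau)$, for a.e.\ $\tau$, with $\|\vb_\tau\|_{L^2(\mu_\tau)}\le|\mu'|(\tau)$) is itself the main content of the theorem, obtained by a compactness/duality argument: one extracts a weak limit of the rescaled displacements (or of the vector fields furnished by the superposition/duality characterization of absolutely continuous curves) and then identifies it as the velocity of minimal norm. Your final paragraph gestures at the right estimates but uses them only to justify an interchange of limits, not to establish that the limit object exists in the first place. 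As a sketch deferring that step to \cite{Ambrosio:book05} this is acceptable and consistent with how the paper itself uses the result, but a self-contained proof would need that existence argument spelled out, since it is the part that cannot be obtained from the formal Taylor computation.
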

Function $F$ in Section~\ref{sec:gf_euc} is lifted to be a functional in the space of probability measures, mapping a probability measure $\mu$ to a real value, {\it i.e.}, $F: \mathcal{P}(\Omega) \rightarrow \mathbb{R}$. $F$ is the energy functional of a gradient flow on $\mathcal{P}(\Omega)$. Consequently, it can be shown that $\vb_{\tau}$ in Lemma~\ref{theo:gf_w_exist} has the form $\vb_{\tau} = -\nabla \frac{\delta F}{\delta \mu_{\tau}}(\mu_{\tau})$ \cite{Ambrosio:book05}, where $\frac{\delta F}{\delta \mu_{\tau}}$ is called the {\em first variation} of $F$ at $\mu_{\tau}$. Based on this, gradient flows on $\mathcal{P}(\Omega)$ can be written
\vspace{-0.3cm}
\begin{align}\label{eq:gf_dis1}
\partial_{\tau} \mu_{\tau} = -\nabla \cdot (\vb_{\tau} \mu_{\tau}) = \nabla \cdot \left(\mu_{\tau} \nabla(\frac{\delta F}{\delta \mu_{\tau}}(\mu_{\tau}))\right)~.
\end{align}

\begin{remark}
	Intuitively, an energy functional $F$ characterizes the landscape structure (appearance) of the corresponding manifold, and the gradient flow \eqref{eq:gf_dis1} defines a solution path on this manifold. Usually, by choosing appropriate $F$, the landscape is convex, {\it e.g.}, the It\'{o}-diffusion case defined below. This provides a theoretical guarantee of optimal convergence of a gradient flow.
\end{remark}

\paragraph{It\'{o} diffusions as WGFs}
It\'{o} diffusion defines a stochastic mapping $\mathcal{T}: \mathbb{R}^d \times \mathbb{R} \rightarrow \mathbb{R}^d$ such that we have $\mathcal{T}(\xb, 0) = \xb$ and $\mathcal{T}(\mathcal{T}(\xb, \tau), s) = \mathcal{T}(\xb, s + \tau)$, for all $\xb \in \mathbb{R}^d$ and $s, \tau \in \mathbb{R}$. A typical example of this family is defined as $\mathcal{T}(\xb, \tau) = \xb_{\tau}$, where $\xb_{\tau}$ is driven by a diffusion of the form:
\vspace{-0.4cm}
\begin{align}\label{eq:diffusion}
\mathrm{d}\xb_{\tau} = \nabla U(\xb_{\tau}) \mathrm{d}\tau + \sigma(\xb_{\tau})\mathrm{d}\mathcal{W}~.
\end{align}
Here $U: \mathbb{R}^d\rightarrow\mathbb{R}^d$, $\sigma: \mathbb{R}^{d} \rightarrow \mathbb{R}^{d\times d}$ are called the drift and diffusion terms, respectively; $\mathcal{W}$ is the standard $d$-dimensional Brownian motion. In Bayesian inference, we seek to make the stationary distribution of $\xb_{\tau}$ approach a particular distribution $p(\xb)$, {\it e.g.}, a posterior distribution. One solution for this is to set $U(\xb_{\tau}) = \frac{1}{2}\log p(\xb)$ and $\sigma(\xb_{\tau})$ as the $d\times d$ identity matrix. The resulting diffusion is called Langevin dynamics \cite{WellingT:ICML11}. Denoting the distribution of $\xb_{\tau}$ as $\mu_{\tau}$, it is well known \cite{Risken:FPE89} that $\mu_{\tau}$ is characterized by the Fokker-Planck (FP) equation: 
\begin{align}\label{eq:FPE}
\partial_{\tau} \mu_{\tau} = \nabla\cdot \left(-\mu_{\tau}\nabla U + \nabla\cdot\left(\mu_{\tau}\sigma\sigma^{\top}\right)\right)~.
\end{align}
Note \eqref{eq:FPE} is in the gradient-flow form of \eqref{eq:gf_dis1}, where the energy functional $F$ is defined as\footnote{We assume $\sigma$ to be independent of $\xb$, which is the case in Langevin dynamics whose stationary distribution is set to be proportional to $e^{-U(\xb)}$. As a result, we drop $\sigma$ in the following.}:
{\small\begin{align}\label{eq:ito_energy}
	F(\mu)\triangleq \underbrace{-\int U(\xb)\mu(\xb)\mathrm{d}\xb}_{F_1} + \underbrace{\int \mu(\xb)\log\mu(\xb)\mathrm{d}\xb}_{F_2}
	\end{align}}
Note $F_2$ is the energy functional of a pure Brownian motion ({\it e.g.}, $U(\xb) = 0$ in \eqref{eq:diffusion}). To verify the FP equation with \eqref{eq:gf_dis1}, the first variation of $F_1$ and $F_2$ is calculated as
\begin{align}\label{eq:firstvariation}
\frac{\delta F_1}{\delta \mu} = -U,~~~~\frac{\delta F_2}{\delta \mu} = \log \mu + 1~.
\end{align}
Substituting \eqref{eq:firstvariation} into \eqref{eq:gf_dis1} recovers the FP equation \eqref{eq:FPE}.

\paragraph{Numerical methods}
Inspired by the Euclidean-space case, gradient flow \eqref{eq:gf_dis1} can be approximately solved by discretizing time, leading to an iterative optimization problem, where for iteration $k$: $\mu_{k+1}^{(h)} \in \arg\min_{\mu}F(\mu) + \frac{W_2^2(\mu, \mu_k^{(h)})}{2h}$.
Specifically, for It\'{o}-diffusion with $F$ defined in \eqref{eq:ito_energy}, the optimization problem becomes:
\begin{align}\label{eq:ito_discrete}
\mu_{k+1}^{(h)} = \arg\min_{\mu}\KL\left(\mu\|p(\xb)\right) + \frac{W_2^2(\mu, \mu_k^{(h)})}{2h}~,
\end{align}
where $p(\xb) \triangleq \frac{1}{Z}e^{U(\xb)}$ is the target distribution. This procedure is called the Jordan-Kinderlehrer-Otto (JKO) scheme. Remarkably, the convergence of \eqref{eq:ito_discrete} can be guaranteed \cite{JordanKO:MA98}, as stated in Lemma~\ref{theo:variational_fp}.

\begin{lemma}\label{theo:variational_fp}
	Assume that $\log p(\xb)\leq C_1$ is infinitely differentiable, and $\|\nabla \log p(\xb)\| \leq C_2\left(1 + C_1 - \log p(\xb)\right) (\forall \xb)$ for some constants $\{C_1, C_2\}$. Let $T = hK$, $\mu_0 \triangleq q_0(\xb)$, and $\{\mu_k^{(h)}\}_{k=1}^K$ be the solution of the functional optimization problem \eqref{eq:ito_discrete}, which are restricted to the space with finite second-order moments. Then $\RN{1})$ the problem \eqref{eq:ito_discrete} is convex; and $\RN{2})$ $\mu_K^{(h)}$ converges to $\mu_T$ in the limit of $h\rightarrow 0$, {\it i.e.}, $\lim_{h\rightarrow 0}\mu_K^{(h)} = \mu_T$, where $\mu_T$ is the solution of \eqref{eq:FPE} at $T$.
\end{lemma}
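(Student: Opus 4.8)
The plan is to treat the two claims separately, handling convexity by direct inspection and then invoking the machinery behind the Jordan--Kinderlehrer--Otto scheme for the limit. For part $\RN{1}$, I would first rewrite the energy as $\KL(\mu\|p)=\int \mu\log\mu\,\rmd\xb-\int \mu\log p\,\rmd\xb$, so that the first term is strictly convex in $\mu$ (since $t\mapsto t\log t$ is convex) and the second is linear in $\mu$; hence $F$ is convex over the convex set of densities with finite second moment. The only remaining piece is the proximal term: I would show that $\mu\mapsto W_2^2(\mu,\mu_k^{(h)})$ is convex along the linear interpolation $\mu_s=(1-s)\mu_0+s\mu_1$, which follows by using the convex combination $(1-s)\gamma_0+s\gamma_1$ of the optimal transport plans as a (suboptimal) competitor, whose marginals are exactly $\mu_s$ and $\mu_k^{(h)}$. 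Summing the three contributions shows the objective in \eqref{eq:ito_discrete} is convex, and strict convexity of the entropy gives a unique minimizer; existence follows from weak lower semicontinuity of $F$ and of $W_2^2$ together with the tightness supplied by the moment and entropy bounds.

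For part $\RN{2}$ the key steps, in order, are: (a) derive the discrete Euler--Lagrange equation for the minimizer $\mu_{k+1}^{(h)}$ by perturbing it through a smooth vector field, i.e. comparing against $(\mathrm{id}+s\,\xib)_{\#}\mu_{k+1}^{(h)}$ and setting $\frac{\rmd}{\rmd s}\big|_{s=0}=0$; this produces a discrete weak form of \eqref{eq:FPE} in which the optimal transport map between $\mu_k^{(h)}$ and $\mu_{k+1}^{(h)}$ plays the role of a time derivative. (b) Establish the basic a priori estimate: since $\mu_{k+1}^{(h)}$ minimizes the objective, testing with $\mu=\mu_k^{(h)}$ gives $F(\mu_{k+1}^{(h)})+\frac{1}{2h}W_2^2(\mu_{k+1}^{(h)},\mu_k^{(h)})\le F(\mu_k^{(h)})$, and telescoping yields $\sum_k W_2^2(\mu_{k+1}^{(h)},\mu_k^{(h)})\le 2h\,(F(\mu_0)-\inf F)$. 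The boundedness $\log p\le C_1$ and the gradient-growth hypothesis are precisely what guarantee $F$ is bounded below and the second moments stay controlled, so the right-hand side is $O(h)$. (c) Define the piecewise-constant interpolation $\mu^{(h)}(\tau)=\mu_k^{(h)}$ on $[kh,(k+1)h)$; the displacement estimate supplies approximate H\"older-$\frac12$ continuity in $\tau$ for $W_2$, which, with the moment bounds, yields relative compactness and a subsequential narrow limit $\mu_\tau$.

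Finally I would pass to the limit in the discrete Euler--Lagrange equations of step (a): summed against a smooth space--time test function, the discrete ``time-derivative'' terms telescope into $\partial_\tau\mu_\tau$, the error from replacing the transport maps by $\partial_\tau$ being controlled by $\sum_k W_2^2(\mu_{k+1}^{(h)},\mu_k^{(h)})=O(h)$, while the drift and diffusion terms converge by the regularity of $\log p$; the limit $\mu_\tau$ thus solves the weak form of \eqref{eq:FPE} with initial datum $q_0$. Since that equation admits a unique solution under the stated assumptions on $p$, every subsequential limit agrees with $\mu_T$ at $T=hK$, giving $\lim_{h\to0}\mu_K^{(h)}=\mu_T$ for the full family. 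I expect the genuine difficulty to lie in step (c) and this limit passage: converting the summed displacement bound into compactness in the correct topology, and showing that the nonlinear transport-map terms of the discrete scheme converge to the linear Fokker--Planck flux, is the technical heart of the argument, whereas the convexity claim and the a priori estimate are comparatively routine.
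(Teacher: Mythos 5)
Your proposal is correct and follows essentially the same route as the paper, which does not prove this lemma itself but imports it from Jordan--Kinderlehrer--Otto \cite{JordanKO:MA98}: convexity via the entropy-plus-linear decomposition of $\KL(\mu\|p)$ together with convexity of $W_2^2(\cdot,\mu_k^{(h)})$ along linear interpolation of plans, and convergence via the Euler--Lagrange equation from push-forward perturbations, the telescoped displacement estimate $\sum_k W_2^2(\mu_{k+1}^{(h)},\mu_k^{(h)})=O(h)$, compactness of the piecewise-constant interpolant, and uniqueness of the Fokker--Planck solution. The only points you gloss over that the original reference must handle carefully are the Carleman-type lower bound on the entropy in terms of second moments (needed for $\inf F>-\infty$) and the discrete Gronwall argument keeping the moments bounded on $[0,T]$, but these are consistent with the level of a sketch.
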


\begin{remark}
	Since the stationary distribution of the FP equation \eqref{eq:FPE} is proportional to $e^{U(\xb)}$, Lemma~\ref{theo:variational_fp} suggests that $\lim_{k\rightarrow\infty, h\rightarrow 0} \mu_{k}^{(h)}= \frac{1}{Z}e^{U}$, a useful property to guide design of energy functionals for RL, as discussed in Section~\ref{sec:RLWGF}.
\end{remark}

\section{Particle Approximation for WGFs}\label{sec:par_opt}
We focus on solving It\'{o} diffusions with scheme \eqref{eq:ito_discrete}. Directly reformulating gradient flows as a sequential optimization problem in \eqref{eq:ito_discrete} is infeasible, because $\{\mu_k^{(h)}\}$ are infinite-dimensional objects. We propose to use particle approximation to solve \eqref{eq:ito_discrete}, where particles continuously evolve over time. There exist particle-based algorithms for gradient-flow approximations, for example, the stochastic and deterministic particle methods in \cite{CottetK:00,Russo:CPAM90,CarrilloCP:arxiv17}. However, they did not target the JKO scheme, and thus are not applicable to our setting. Another advantage of the JKO scheme is that it allows direct application of gradient-based algorithms, once we get gradients of the particles; thus, it is particularly useful in deep-learning-based methods where parameters are updated by backpropagating gradients through a network.

Following similar idea as in \cite{ChenZWLC:tech18}, in the $k$-th iteration of our algorithm, $M$ particles $\{\xb_k^{(i)}\}_{i=1}^M$ are used to approximate $\mu_k^{(h)}$:
$\mu_k^{(h)} \approx \frac{1}{M}\sum_{i=1}^M\delta_{\xb_k^{(i)}}$.
Our goal is to evolve $\{\xb_k^{(i)}\}$ such that the corresponding empirical measure, $\mu^{(h)} \triangleq \frac{1}{M}\sum_{i=1}^M\delta_{\xb^{(i)}}$, minimizes \eqref{eq:ito_discrete}. A standard method is to use gradient descent to update the particles, where gradients $\{\frac{\partial \KL(\mu, \mu_k^{(h)})}{\partial \xb^{(i)}}, \frac{\partial W_2^2(\mu, \mu_k^{(h)})}{\partial \xb^{(i)}}\}$ are required according to \eqref{eq:ito_discrete}. By assuming $\xb_k^{(i)}$ to evolve in the form of $\xb_{k+1}^{(i)} = \xb_k^{(i)} + h \phi(\xb_k^{(i)})$, with function $\phi$ restricted to an RKHS with kernel $K(\cdot, \cdot)$, the gradient of the KL term is calculated as \cite{LiuW:NIPS16}:
{\small\begin{align}\label{eq:klgrad}
	\frac{\partial \KL(\mu^{(h)}, \mu_k^{(h)})}{\partial \xb^{(i)}} \propto &\frac{1}{M}\sum_{j=1}^M\left[K(\xb^{(j)}, \xb^{(i)})\nabla_{\xb^{(j)}}\log p(\xb^{(j)})\right.\nonumber\\ 
	&\left.+ \nabla_{\xb^{(j)}}K(\xb^{(j)}, \xb^{(i)})\right]~.
	\end{align}}
The gradient for $W_2^2(\mu^{(h)}, \mu_k^{(h)})$ is more involved, as the distance does not have a closed form. The Wasserstein term arises due to the Brownian motion in the diffusion process \eqref{eq:diffusion}, and the non-differentiability of a sample path from a Brownian motion is translated into the Wasserstein distance. We develop a simple yet effective method to overcome this issue below.

First, using a particle approximation, $W_2^2(\mu^{(h)}, \mu_k^{(h)})$ is simplified as
\vspace{-0.3cm}{\small\begin{align}\label{eq:w2}
	W_2^2&(\mu, \mu_k^{(h)}) = \inf_{p_{i,j}}\sum_{i,j}p_{ij}c(\xb^{(i)}, \xb_k^{(j)}) \\
	s.t.&~~ \sum_j p_{ij} = \frac{1}{M}, ~~\sum_i p_{ij} = \frac{1}{M}~,\nonumber
	\end{align}}
\vspace{-0mm}
where $c(\xb_1, \xb_2) \triangleq \|\xb_1 - \xb_2\|_2^2$. Our goal turns to solving for the optimal $\{p_{ij}\}$. Since $W_2$ comes from the Brownian motion, the energy functional in its gradient flow is defined as $F_2$ in \eqref{eq:ito_energy}. Solving the gradient flow with the JKO scheme, at each iteration we minimize $\lambda F_2 + W_2^2(\mu, \mu_k^{(h)})$ with $\lambda$ a regularization parameter. Substituting $W_2^2$ with \eqref{eq:w2}, introducing Lagrangian multipliers $\{\alpha_i, \beta_i\}$ to deal with the constraints, and letting $c_{ij} \triangleq c(\xb^{(i)}, \xb_k^{(j)}) $, the dual problem is,
\vspace{-0.35cm}
\begin{align*}
\mathcal{L}&(\{p_{ij}\}, \{\alpha_i\}, \{\beta_i\}) = \lambda\sum_{i,j}p_{ij}\log p_{ij} + p_{ij}c_{ij} \\
&+ \sum_i \alpha_i(\sum_j p_{ij} - \frac{1}{M}) + \sum_j \beta_j(\sum_i p_{ij} - \frac{1}{M})
\end{align*}
\vspace{-5.5mm}

The optimal $p_{ij}$ have forms of $p_{ij}^* = u_i e^{-c_{ij}/\lambda}v_j$, where $u_i \triangleq e^{-\frac{1}{2}-\frac{\alpha_i}{\lambda}}$, $v_j = e^{-\frac{1}{2}-\frac{\beta_j}{\lambda}}$.
Assuming $\{u_i\}$ and $\{v_j\}$ are independent of $\{\xb^{(i)}\}$ and $\{\xb_k^{(j)}\}$, 
\vspace{-2mm}
\begin{align}\label{eq:w2grad}
&\frac{\partial W_2^2(\mu, \mu_k^{(h)})}{\partial \xb^{(i)}} \propto \frac{\sum_jc_{ij}e^{-c_{ij}/\lambda}}{\partial \xb^{(i)}} \nonumber\\
=& \sum_j 2\left(1 - \frac{c_{ij}}{\lambda}\right)e^{-c_{ij}/\lambda}(\xb^{(i)} - \xb_k^{(j)})~.
\end{align}\vspace{-7mm}

The gradients of particles can be obtained by combining \eqref{eq:klgrad} and \eqref{eq:w2grad}, which are then optimized using SGD. Intuitively, from \eqref{eq:w2grad}, the Wasserstein term contributes in two ways: $\RN{1})$ When $\frac{c_{ij}}{\lambda} > 1$, $\xb^{(i)}$ is pulled close to previous particles $\{\xb_k^{(j)}\}$, with force proportional to $(\frac{c_{ij}}{\lambda} - 1)e^{-c_{ij}/\lambda}$; $\RN{2})$ when $\xb^{(i)}$ is close enough to a previous particle $\xb_k^{(j)}$, {\it i.e.}, $\frac{c_{ij}}{\lambda} < 1$, $\xb^{(i)}$ is pushed away, preventing it from collapsing to $\xb_k^{(j)}$.
\vspace{-0.2cm}
\section{Policy Optimization as WGFs}\label{sec:RLWGF}
Reinforcement learning is the problem of finding an optimal policy for an agent interacting with an unknown environment, collecting a reward per action. A policy is defined as a conditional distribution, $\pi(\ab|\sbb)$, defining the probability over an action $\ab\in \mathcal{A}$ conditioned on a state variable $\sbb\in\mathcal{S}$. Formally, the problem can be described as a Markov decision process (MDP), $\mathcal{M} = \langle\mathcal{S}, \mathcal{A}, P_s, r, \gamma\rangle$, where $P_s(\sbb^\prime|\sbb, \ab)$ is the transition probability from state $\sbb$ to $\sbb^\prime$ given action $\ab$; $r(\sbb, \ab)$ is an unknown reward function immediately following the action $\ab$ performed at state $\sbb$; $\gamma \in [0, 1]$ is a discount factor regularizing future rewards. We denote these variables with a subscript $t$ to indicate their time dependency. At each time step $t$, conditioned on the current state $\sbb_t$, the agent chooses an action $\ab_t \sim \pi(\cdot|\sbb_t)$ and receives a reward signal\footnote{We assume the reward function to be deterministic, for simplicity; stochastic rewards can be addressed similarly.} $r(\ab, \sbb)$. The environment as seen by the agent then updates its state as $\sbb_{t+1} \sim P_s(\cdot|\sbb_t, \ab_t)$. The goal is to learn an optimal policy such that one obtains the maximum expected total reward, {\it e.g.}, by maximizing
\vspace{-2mm}
{\small\begin{align}\label{eq:policy_learning}
	J(\pi) = \sum_{t=1}^{\infty}\mathbb{E}_{P_s, \pi}\left[\gamma^tr(\ab, \sbb)\right] = \mathbb{E}_{\sbb\sim\rho_{\pi}, \ab\sim\pi}\left[r(\sbb, \ab)\right]
	\end{align}}
\vspace{-5mm}

where $\rho_{\pi} \triangleq \sum_{t=1}^{\infty}\gamma^{t-1}P_r(\sbb_t = \sbb)$, and $P_r(\sbb)$ denotes the state marginal distribution induced by $\pi$. 
Optimizing the objective in (\ref{eq:policy_learning}) with a maximum entropy constraint provides us with a framework for training stochastic policies, where specific forms of these policy distribution are required, restricting the representation power. To define a more general class of distributions that can represent more complex and multimodal distributions, we adopt the general energy-based policies \cite{HaarnojaTAL:ICML17}, and transform it into the WGF framework.
	
	Specifically, in the WGF framework, policies form a Riemannian manifold on the space of probability measures. The manifold structure is determined by the expected total reward \eqref{eq:policy_learning}, and the geodesic length between two elements (policy distributions) is defined as the standard second-order Wasserstein distance. With convex energy functionals (defined below), searching for an optimal policy reduces to running SGD on the manifold of probability measures.
	
	In the following, we define gradient flows on both parameter-distribution space and policy-distribution space, leading to indirect-policy learning and direct-policy learning, respectively. In indirect-policy learning, a WGF is defined over policy parameters; whereas in direct-policy learning, a WGF is defined over actions. For both settings, different energy functionals are defined based on the expected total reward, as detailed below. We note that most existing deep RL algorithms cannot be included into the two settings, without the concept of WGF. However, their specific techniques could be applied as intermediate ingredients in our framework.
	\vspace{-0.3cm}
	\subsection{Indirect-policy learning}\label{sec:impPL}
	\vspace{-0.2cm}
	
	With indirect-policy learning, we do not optimize the stochastic policy $\pi$ directly. Instead, we aim to describe uncertainty of a policy with parameter distributions (weight uncertainty). Thus we define a gradient flow on the parameters. Let a policy be parameterized by $\thetab$, denoted as $\pi_{\thetab}$. If we treat $\thetab$ as stochastic and learn  its posterior distribution $p(\thetav)$ in response to the expected total reward, the policy is implicitly learned in the sense that uncertainty in the parameter is transferred into the policy distribution in prediction. Following \citep{houthooft2016vime, liu2017stein}, the objective function is defined as:
	\begin{equation}
	\label{equ:rlobjective}
	\begin{aligned}
	\max_{p}\{\mathbb{E}_{p(\thetav)}[J(\pi_\thetav)] - \alpha \KL(p\|p_0)\}
	\end{aligned}
	\end{equation}
	where $p_0(\thetav)$ is the prior of  $\thetav$; $\alpha \in [0, +\infty)$ is the temperature hyper-parameter to balance exploitation and exploration in the policy.
	If we use an uninformative prior, $p_0(\theta) = $ const, the KL term is simplified to the entropy as $\max_{p}\{\mathbb{E}_{p(\thetav)}[J(\pi_\thetav)] + \alpha \mathcal{H}(p)\}$.
	By taking the derivative of the objective function, the optimal distribution is shown to have a simple closed form of $p(\thetav)\propto \exp\left(J(\pi_\thetav)/\alpha\right)$ \citep{liu2017stein}.
	%
	This formulation is equivalent to a Bayesian formulation of parameter $\thetav$, where $p(\thetav)$ can be seen as the ``posterior'' distribution, and $\exp(J(\pi_\thetav)/\alpha)$ is the ``likelihood'' function.
	A variational (posterior) distribution for $\thetab$, denoted as $\mu(\thetab)$, is learned by solving an appropriate gradient-flow problem.  We define an energy functional characterizing the similarity between the current parameter distribution and the true distribution induced by the total reward as
	\vspace{-0.3cm}
	{\small
	\begin{align}\label{eq:imp_energy}
	F(\mu) &\triangleq -\int J(\pi_{\thetab})\mu(\thetab)\mathrm{d}\thetab + \int \mu(\thetab)\log\mu(\thetab)\mathrm{d}\thetab \nonumber\\
	&= \KL\left(\mu\|p_{\thetab}\right)~,
	\end{align}}
	\!\!The energy functional defines a landscape determined by the expected total reward, and obtains its minimum when $\mu = p_{\thetab}$.
	\begin{proposition}\label{prop:IPWGF}
		For the gradient flow with energy functional defined in \eqref{eq:imp_energy}, $\mu$ converges to $p_{\thetab}$ in the infinite-time limit.
	\end{proposition}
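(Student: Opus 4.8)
The plan is to recognize that the energy functional in \eqref{eq:imp_energy} is a special case of the It\'{o}-diffusion energy \eqref{eq:ito_energy}, so that the convergence result already developed in Lemma~\ref{theo:variational_fp} (and the remark following it) applies almost verbatim. Concretely, I would identify the potential $U$ of the diffusion \eqref{eq:diffusion} with the expected total reward, $U(\thetab) \triangleq J(\pi_{\thetab})$, and recall that $p_{\thetab} \propto e^{J(\pi_{\thetab})}$. Writing $F(\mu) = -\int J(\pi_{\thetab})\mu(\thetab)\mathrm{d}\thetab + \int \mu(\thetab)\log\mu(\thetab)\mathrm{d}\thetab$, a direct computation gives $F(\mu) = \KL(\mu\|p_{\thetab}) - \log Z$, where $Z$ is the normalizer of $p_{\thetab}$; since $\log Z$ is constant in $\mu$, it plays no role in the gradient flow.

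First I would compute the first variation of $F$. Using \eqref{eq:firstvariation} with the two terms $F_1 = -\int J(\pi_{\thetab})\mu\,\mathrm{d}\thetab$ and $F_2 = \int \mu\log\mu\,\mathrm{d}\thetab$, we obtain $\frac{\delta F}{\delta \mu} = -J(\pi_{\thetab}) + \log\mu + 1$. Substituting this into the gradient-flow equation \eqref{eq:gf_dis1} yields exactly the Fokker--Planck equation \eqref{eq:FPE} with drift $\nabla U = \nabla J(\pi_{\thetab})$ and unit diffusion. Hence the continuous-time flow driven by \eqref{eq:imp_energy} is the Langevin dynamics whose stationary density is proportional to $e^{J(\pi_{\thetab})}$, i.e.\ $p_{\thetab}$.

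To conclude convergence in the infinite-time limit, I would invoke the remark following Lemma~\ref{theo:variational_fp}: the stationary distribution of \eqref{eq:FPE} is $\frac{1}{Z}e^{U}=p_{\thetab}$, and the lemma guarantees that the flow (obtained as the $h\to0$ limit of the JKO iterates \eqref{eq:ito_discrete}) converges to it. A complementary, more self-contained route is a Lyapunov argument: differentiating $F$ along the flow and integrating by parts gives the dissipation identity $\frac{\mathrm{d}}{\mathrm{d}\tau}F(\mu_{\tau}) = -\int \mu_{\tau}\bigl|\nabla\tfrac{\delta F}{\delta \mu_{\tau}}\bigr|^2\mathrm{d}\thetab \le 0$, so $F$ decreases monotonically; since $F = \KL(\mu\|p_{\thetab})$ up to a constant is convex in $\mu$ and bounded below, with stationarity reached only when $\nabla\frac{\delta F}{\delta\mu}=0$, i.e.\ $\log(\mu/p_{\thetab})$ is constant, the flow must converge to $\mu = p_{\thetab}$.

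The main obstacle is not the algebra but verifying the regularity hypotheses required by Lemma~\ref{theo:variational_fp} for the specific choice $\log p_{\thetab} = J(\pi_{\thetab}) - \log Z$: namely that $J(\pi_{\thetab})$ is bounded above, infinitely differentiable in $\thetab$, and satisfies the growth condition $\|\nabla \log p_{\thetab}\|\le C_2(1 + C_1 - \log p_{\thetab})$. These hold under mild assumptions (bounded rewards make $J$ bounded, and a smooth policy parameterization makes $J$ smooth), but they must be recorded as standing assumptions; one also needs the iterates to remain in the space of measures with finite second-order moments, as demanded in the hypothesis of Lemma~\ref{theo:variational_fp}.
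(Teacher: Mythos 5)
Your proposal is correct and follows essentially the same two routes as the paper's own proof: the identification of the flow with a Fokker--Planck/Langevin equation whose invariant measure is $p_{\thetab}\propto e^{J(\pi_{\thetab})}$, and the Lyapunov-style dissipation identity showing $F=\KL(\mu\|p_{\thetab})$ decreases monotonically to its global minimum by convexity. Your additional remarks on verifying the regularity and moment hypotheses of Lemma~\ref{theo:variational_fp} are a point the paper leaves implicit, but they do not change the argument.
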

	To solve the above gradient-flow problem, one can apply the JKO scheme with a stepsize $h$ (we follow previous notation to use subscript $k$ to denote discrete-time solutions and superscript $h$ to denote the stepsize):
	{\small
	\begin{align}\label{eq:imp_discrete}
	\mu_{k+1}^{(h)} = \arg\min_{\mu}\KL\left(\mu\|p_{\thetab}\right) + \frac{W_2^2(\mu, \mu_k^{(h)})}{2h}~.
	\end{align}}
	\!\!The above problem can be directly solved with gradient descent by adopting the particle approximation described in Section 3. Specifically, let the current particles be $(\thetab^{(i)})_{i=1}^M$. When calculating $\frac{\partial \KL(\mu\|p_{\thetab})}{\partial \thetab^{(i)}}$ as in \eqref{eq:klgrad}, we need to evaluate $\nabla_{\thetab^{(i)}}J(\pi_{\thetab^{(i)}})$. This can be approximated with REINFORCE \cite{williams1992simple} or advantage actor critic \cite{schulman2015high}. For example, with REINFORCE,
	{\small
	\begin{align*}
	\nabla_{\thetab^{(i)}}J(\pi_{\thetab^{(i)}}) \approx \frac{1}{T}\sum_{t=1}^T\gamma^{t-1}\nabla_{\thetab^{(i)}}\log \pi_{\thetab^{(i)}}(\ab_t|\sbb_t)\hat{Q}^{\pi}(\sbb_t, \ab_t)
	\end{align*}}
	\!\!where $T$ is a horizon parameter, and $\hat{Q}^{\pi}(\sbb_t, \ab_t)$ is the Q-value function. We call this variant of our framework Indirect Policy learning with WGF (IP-WGF).

	\begin{remark}
		Assume gradients $\nabla_{\thetab^{(i)}}J(\pi_{\thetab^{(i)}})$ and $\nabla_{\thetab^{(i)}}W_2^2$ are unbiased. Under the limit of $M\rightarrow \infty$ and $h\rightarrow 0$, and based on the fact that $F$ in \eqref{eq:imp_energy} is convex, Lemma~\ref{theo:variational_fp} suggests the particle approximation converges to the global minimum $p_{\thetab}$. The conclusion applies, in the next section, similarly in the direct-policy-learning case. Existing methods such as TRPO \cite{schulman2015trust} and PPO \cite{SchulmanWDRK:arxiv17} do not have such an interpretation, thus understanding their underlying convergence is more challenging. Furthermore, these methods optimize parameters directly as fixed points, deteriorating their ability to explore when policy distributions are inappropriately defined, as stochasticity only comes from the policy distributions.
	\end{remark}
	
	\subsection{Direct-policy learning}\label{sec:exp_learning}
	When the dimension of parameter space is high, as is often the case in practice, IP-WGF can suffer from computation and storage inefficiencies.
	In direct-policy learning, a gradient flow is defined for the distribution of policies, thus a policy is {\em directly} optimized during learning. This approach appears to be more efficient and flexible, and connects more directly to existing works compared with indirect-policy learning.
	
	Specifically, we consider a general energy-based policies of the form $\pi(\ab|\sbb)\propto \exp(-\varepsilon(\sbb,\ab)/\alpha)$ that is able to model more complex distributions \cite{HaarnojaTAL:ICML17}. We formulate the direct-policy learning as policy-distribution-based gradient flows. The energy functional is defined with respect to the learned policy $\pi$, thus it should depend on states. To this end, let $\varepsilon_{s, \pi}(\ab) = -Q(\ab_t, \sbb_t)$, where $Q(\ab_t, \sbb_t)\triangleq r(\ab_t = \ab, \sbb_t = \sbb) + \mathbb{E}_{(\sbb_{t+1}, \ab_{t+1}, \cdots) \sim (\rho_{\pi}, \pi)}\sum_{l=1}	^{\infty}\gamma^lr(\sbb_{t+l}, \ab_{t+l})$. $Q(\ab_t, \sbb_t)$ is seen to be a functional depending on the current $\sbb_t$ and $\ab_t$, as well as the policy $\pi$. Integrating out the action $\ab$, an energy functional characterizing the similarity of the current policy $\pi$ and the optimal policy, $p_{s,\pi}(\ab|\sbb) \propto e^{Q(\ab, \sbb)}$, is readily defined as
	{\small\begin{align}\label{eq:exp_energy}
		F_s(\pi) &\triangleq -\int Q(\ab, \sbb)\pi(\ab|\sbb)\mathrm{d}\ab + \int \pi(\ab|\sbb)\log\pi(\ab|\sbb)\mathrm{d}\ab \nonumber\\
		&= \KL\left(\pi\|p_{s,\pi}\right)~.
		\end{align}}
	\vspace{-0.6cm}
	\begin{remark}
		Soft $Q$-learning \cite{HaarnojaTAL:ICML17} adopts $Q(\ab, \sbb) + \mathcal{H}(\pi)$ as the objective function, where the entropy of $\pi$, $\mathcal{H}(\pi)\triangleq-\mathbb{E}_{\pi}[\log\pi]$, is included to add stochasticity into the corresponding $Q$-function. By treating the problem as a WGF, the stochasticity is modeled directly in the policy distribution, thus we do not include the entropy term, though it is of no harm to add it in.
	\end{remark}
	\vspace{-0.2cm}
	\begin{proposition}\label{prop:dpwgf}
		For a WGF with the energy functional defined in \eqref{eq:exp_energy}, $\pi(\ab|\sbb)$ converges to $p_{s, \pi}(\ab) \propto e^{Q(\ab, \sbb)}$ with $Q(\ab, \sbb)$ satisfying the following modified Bellman equation:
		{\small\begin{align*}
			Q(\ab_t, \sbb_t) = r(\ab_t, \sbb_t) + \gamma \mathbb{E}_{\sbb_{t+1}\sim \rho_{\pi}} [V_{\pi}(\sbb_{t+1}) - \mathcal{H}(\pi(\cdot|\sbb_{t+1}))]
		\end{align*}}
		where $V_{\pi}(\sbb_{t+1}) \triangleq \log\int_{\mathcal{A}}\exp(Q(\ab, \sbb_{t+1}))\mathrm{d}\ab$.
	\end{proposition}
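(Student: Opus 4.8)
The plan is to separate the statement into two parts: (\RN{1}) that the Wasserstein gradient flow converges to $p_{s,\pi}\propto e^{Q(\ab,\sbb)}$, and (\RN{2}) that the limiting $Q$ satisfies the displayed modified Bellman equation. For part (\RN{1}), I would note that at a fixed state $\sbb$ the energy functional $F_s(\pi)=\KL(\pi\|p_{s,\pi})$ from \eqref{eq:exp_energy} is precisely the It\^o-diffusion functional \eqref{eq:ito_energy} under the identification $U(\ab)=Q(\ab,\sbb)$, so that the JKO-discretized scheme is exactly \eqref{eq:ito_discrete} with target $p(\xb)$ replaced by $p_{s,\pi}$. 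Treating $Q$ as a fixed drift potential, the functional is convex in $\pi$ with unique minimizer $\pi=p_{s,\pi}$, and Lemma~\ref{theo:variational_fp} together with the remark following it gives $\lim_{k\to\infty,\,h\to 0}\pi_k^{(h)}=\frac{1}{Z}e^{Q(\ab,\sbb)}=p_{s,\pi}$, mirroring Proposition~\ref{prop:IPWGF}. This step requires checking that the hypotheses of Lemma~\ref{theo:variational_fp} hold for $p_{s,\pi}$ (boundedness of $\log p_{s,\pi}$ from above and the gradient-growth bound), which reduces to mild smoothness and boundedness assumptions on the reward and hence on $Q$.

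For part (\RN{2}) I would work at the converged fixed point, where $\pi(\ab|\sbb)=p_{s,\pi}(\ab|\sbb)=\frac{1}{Z_s}e^{Q(\ab,\sbb)}$ with $Z_s=\int_{\mathcal{A}}e^{Q(\ab,\sbb)}\mathrm{d}\ab$, so that
\begin{align*}
\log\pi(\ab|\sbb)=Q(\ab,\sbb)-V_{\pi}(\sbb),\qquad V_{\pi}(\sbb)=\log\int_{\mathcal{A}}e^{Q(\ab,\sbb)}\mathrm{d}\ab.
\end{align*}
Next I would unfold the defining expression for $Q$ by a single time step, peeling off the first future reward and recognizing the discounted tail as $Q(\ab_{t+1},\sbb_{t+1})$:
\begin{align*}
Q(\ab_t,\sbb_t)=r(\ab_t,\sbb_t)+\gamma\,\mathbb{E}_{\sbb_{t+1}\sim\rho_{\pi}}\,\mathbb{E}_{\ab_{t+1}\sim\pi}\!\left[Q(\ab_{t+1},\sbb_{t+1})\right].
\end{align*}
Substituting $Q=\log\pi+V_{\pi}$ into the inner expectation and using $\mathcal{H}(\pi(\cdot|\sbb_{t+1}))=-\mathbb{E}_{\ab_{t+1}\sim\pi}[\log\pi(\ab_{t+1}|\sbb_{t+1})]$ collapses the action expectation to $V_{\pi}(\sbb_{t+1})-\mathcal{H}(\pi(\cdot|\sbb_{t+1}))$, which yields the claimed recursion.

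The main obstacle I anticipate is the self-referential coupling between $\pi$ and $Q$: the functional \eqref{eq:exp_energy} is not a \emph{fixed} energy, because $p_{s,\pi}$ depends through $Q$ on the very policy being optimized, whereas Lemma~\ref{theo:variational_fp} is stated for a fixed target. The cleanest resolution is to read the flow as a policy-iteration-style scheme in which $Q$ is frozen at the current policy during each inner JKO solve, so that Lemma~\ref{theo:variational_fp} applies verbatim to each inner problem, and then to argue separately that the outer map $\pi\mapsto p_{s,\pi}$ admits a unique self-consistent fixed point $\pi^{\star}=p_{s,\pi^{\star}}$. Establishing this latter fact rigorously—e.g.\ by showing that the associated soft Bellman operator is a $\gamma$-contraction in the supremum norm and invoking the Banach fixed-point theorem—is the delicate step; by contrast, the inner convergence and the Bellman-equation algebra above are routine given the earlier lemmas.
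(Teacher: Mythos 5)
Your proposal is correct and takes essentially the same route as the paper: part (\RN{1}) is disposed of exactly as in Proposition~\ref{prop:IPWGF} via the It\^o-diffusion energy functional and Lemma~\ref{theo:variational_fp}, and part (\RN{2}) is the identical one-step unrolling of the definition of $Q$ followed by substituting $Q=\log\pi+V_{\pi}$ and the entropy identity. Your closing caveat about the self-referential coupling between $\pi$ and $Q$ identifies a real gap that the paper's sketch simply glosses over (it asserts convergence ``by the same argument'' without treating the moving target), so your version is, if anything, more careful than the original.
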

	
To solve the corresponding WGF, we again adopt the JKO scheme as in \eqref{eq:imp_discrete} to optimize the policy $\pi$ by particle approximation, {\it i.e.}, $\pi \propto \frac{1}{M}\sum_{i=1}^M\delta_{\ab^{(i)}}$. One challenge is that when calculating $\frac{\partial \KL(\pi\|p_{s,\pi})}{\partial \ab^{(i)}}$, from \eqref{eq:klgrad}, one needs to evaluate $p_{s,\pi}(\ab^{(i)})$, which is difficult due to the infinite time horizon and the unknown reward function $r(\sbb, \ab)$ when calculating $Q(\ab^{(i)}, \sbb)$. To address this, we approximate the \textbf{soft} $Q$-function, $Q(\cdot, \sbb)$, with a deep neural network $Q_s^{\thetab}(\sbb, \ab)$ parametrized by $\thetab$, {\it i.e.}, $p_{s,\pi}(\ab) \propto e^{Q_s^{\thetab}(\sbb, \ab)}$. The neural network $Q_s^{\thetab}$ naturally leads to a soft approximation of the standard $Q$-function according to Proposition~\ref{prop:dpwgf}. As a result, the learning can be done by alternating between the following two steps.

	
\paragraph{1) Optimizing the policy}
Given $Q_s^{\thetab}$, we could adopt the particle approximation with the JKO scheme to optimize the policy. However, since a policy is a conditional distribution, one needs to introduce a set of particles for each state. For large or continuous state space, this becomes intractable. To mitigate this problem, we propose to use a stochastic state-conditioned neural network $f^{\phib}$ parametrized by $\phib$ to approximate the policy. We call such a network a {\em sampling network}. 


The input to $f^{\phib}$ is a concatenation of a state $\sbb$ and a random-noise sample $\xi$ drawn from a simple distribution, {\it e.g.}, the standard normal distribution. To optimize the sampling network, note that the JKO scheme, with energy functional \eqref{eq:exp_energy}, is written as (we rewrite $\pi$ as $\pi^{\phib}$ to explicitly indicate the dependence of $\pi$ on $\phib$):
\vspace{-0.2cm}
{\small
	\begin{equation}\label{eq:exp_discrete}
	\begin{aligned}
	\pi_{k+1}^{\phib} = \arg\min_{\pi^{\phib}}\KL\left(\pi^{\phib}\|p_{s,\pi}\right) + \frac{W_2^2(\pi^{\phib}, \pi_k^{\phib})}{2h} \triangleq J_{\pi}^{\phib}~.
	\end{aligned}
	\end{equation}
}
\!\!The outputs of $f^{\phib}(\{\xi_i\}; \sbb_t)$ are particles $(\ab_t^{(i)})_{i=1}^M$. Using chain rule we calculate the gradient of $\phib$ as
{\small
	\begin{align*}
	\frac{\partial J_{\pi}^{\phib}}{\partial \phib} = \mathbb{E}_{\{\xi_i\}}\left[\frac{\partial J_{\pi}^{\phib}}{\partial \ab_t^{(i)}}\frac{\partial \ab_t^{(i)}}{\partial \phib}\right]~.
	\end{align*}
}
\!\!Thus $\phib$ can be updated using standard SGD, where $\partial J_{\pi}^{\phib}/\partial \ab_t^{(i)}$ represents particle gradients in the WGF, and is approximated using techniques from Section~\ref{sec:par_opt}; $\partial \ab_t^{(i)}/\partial \phib$ can be calculated by standard backpropagation.

\paragraph{2) Optimizing the $Q$-network·}
We optimize the $Q$-network using the Bellman error as in the soft-$Q$ learning setting \citep{HaarnojaTAL:ICML17}. Specifically, in each iteration, we optimize the following objective function:
{\small
	\begin{align*}
	J_Q(\thetab) \triangleq \mathbb{E}_{\sbb_t \sim q_{\sbb_t}, \ab_t\sim q_{\ab_t}}\left[\frac{1}{2}\left(\hat{Q}_s^{\bar{\thetab}}(\sbb_t, \ab_t) - Q_s^{\thetab}(\sbb_t, \ab_t)\right)^2\right]~,
	\end{align*}
}
\!\!\!\!where $q_{\sbb_t}$ and $q_{\ab_t}$ are arbitrary distributions with support on $\mathcal{S}$ and $\mathcal{A}$, respectively; $\hat{Q}_s^{\bar{\thetab}}(\sbb_t, \ab_t) = r(\sbb_t, \ab_t) + \gamma\mathbb{E}_{\sbb_{t+1}\sim \rho_{\pi}}[V_s^{\bar{\thetab}}(\sbb_{t+1})]$ is the target $Q$-value, with $V_s^{\bar{\thetab}}(\sbb_{t+1}) = \log\mathbb{E}_{q_{\ab^\prime}}[\frac{\exp(Q_s^{\bar{\thetab}}(\sbb_{t+1}, \ab^\prime))}{q_{\ab^\prime}(\ab^\prime)}] - \mathcal{H}(q_{\ab^\prime})$; $\bar{\thetab}$ represents the parameters of the target Q-network, as used in standard deep $Q$-learning \cite{mnih2013playing}. 
$q_{\ab_t}$ can be set to the distribution induced by the sampling network $f^{\phib}$ \cite{HaarnojaTAL:ICML17}. Alternatively, the form of $q_{\ab_t}$ can be explicitly defined, {\it e.g.}, using isotropic Gaussian or mixture of Gaussian distributions. The full algorithm is given in Section~\ref{supp:alg} of the SM. We call this variant of our framework Direct Policy learning with WGF (DP-WGF).

\paragraph{Reducing Variance}
Note that when optimizing the $Q$-network, one needs to calculate the $V_s^{\thetab}$-function. This includes an integration over $q_{\ab^{\prime}}$, which endows the high variance associated with Monte Carlo integration. Consequently, we propose to learn a $V$-network to approximate $V_s^{\thetab}$, denoted as $\bar{V}_{\psib}(\sbb)$ with parameter $\psib$. To learn the $V$-network, similar to \cite{HaarnojaZAL:arxiv18}, we use an explicit policy distribution. As a result, we replace the sampling network $f^{\phib}$ with a BNN discussed in Section~\ref{sec:impPL}, whose induced policy distribution is denoted $\tilde{\pi}_{\phib}$. Intuitively, $V_s^{\thetab}(\sbb)$ can be considered an approximation to the $\log$-normalizer of $\exp(Q_s^{\thetab}(\sbb, \ab))$ over $\ab$. From the definition, the objective is defined as: {\small $J_V(\psib) \triangleq \mathbb{E}_{\sbb_t \sim q_{\sbb_t}}\big(\bar{V}_{\psib}(\sbb_t) - \log \mathbb{E}_{\ab_t \sim \tilde{\pi}_{\phib}(\sbb_t)}\exp \big(Q_s^{\thetab}(\sbb_t, \ab_t) / \tilde{\pi}_{\phib}(\ab_t|\sbb_t)\big) $} {\small $ - \mathbb{E}_{\ab_t \sim \tilde{\pi}_{\phib}(\sbb_t)}\log \tilde{\pi}_{\phib}(\ab_t|\sbb_t)\big)^2$}. 
In our implementation, we find the following approximation works well: {\small $J_V(\psib) \triangleq \mathbb{E}_{\sbb_t \sim q_{\sbb_t}}\left(\bar{V}_{\psib}(\sbb_t) - \mathbb{E}_{\ab_t \sim \tilde{\pi}_{\phib}(\sbb_t)}[Q_s^{\thetab}(\sbb_t, \ab_t) - \log \tilde{\pi}_{\phib}(\ab_t|\sbb_t)]\right)^2$},  which is inspired by \cite{HaarnojaZAL:arxiv18}. We call this variant Direct Policy learning with WGF and Variance reduction (DP-WGF-V).

	\section{Connections with Related Works}\label{sec:related}
	\vspace{-0.1cm}
	\paragraph{Soft-$Q$ learning}
	Though motivated from different perspectives, our DP-WGF results in a similar algorithm as soft-$Q$ learning with energy based policies \cite{HaarnojaTAL:ICML17}. However, DP-WGF is more general, in that we can define different sampling networks, such as a BNN, which can be optimized with the proposed IP-WGF technique.
	\vspace{-0.3cm}
	\paragraph{Soft actor-critic (SAC)}
	SAC \cite{HaarnojaZAL:arxiv18} is an improvement of soft-$Q$ learning, introducing a similar $V$-network as ours and modeling the policy with a mixture of Gaussians. DP-WGF-V is related to SAC, but with a $V$-network from a different perspective (variance reduction). Importantly, we define a gradient flow for policy distributions, allowing optimization from a distribution perspective.
	\vspace{-0.7cm}
	\paragraph{Trust-region methods}
	Trust-Region methods are known to stabilize policy optimization in RL \cite{schulman2015trust}. 
	\citet{schulman2017equivalence} illustrate the equivalence between soft $Q$-learning and policy gradient. In the original TRPO setting, an objective function is optimized subject to a constraint that the updated policy is not too far from the current policy, in terms of the KL divergence (see Section~G for a more detailed descriptions). The theory of TRPO suggests adding a penalty to the objective instead of adopting a constraint, resulting in a similar form as our framework. However, we use the Wasserstein distance to penalize the previous and current policies, which is a weaker metric than the KL divergence, and potentially leads to more robust solutions. This is evidenced by the development of Wasserstein GAN \cite{ArjovskyCB:arxiv17}. As a result, our framework can be regarded as a trust-region-based counterpart for solving the soft Q-learning~\cite{HaarnojaTAL:ICML17} and SVPG~\cite{liu2017stein}. Similar arguments hold for other trust-region methods such as PPO \cite{SchulmanWDRK:arxiv17} and Trust-PCL \cite{NachumNXS:arxiv17}, which improve TRPO with either different objective or trust-region constraints.
	\vspace{-0.7cm}
	\paragraph{Noisy exploration}
	Adding noise to the parameters for noisy exploration \cite{fortunato2017noisy, plappert2017parameter} can be interpreted as a special case of our IP-WGF framework with a single particle. Isotropic Gaussian noisy exploration corresponds to the maximum {\em a posterior} (MAP) solution with a Gaussian assumption on the posterior distributions of parameters, potentially leading to inferior solutions when the assumption is not met. By contrast, our method is endowed with the ability to explore multimodal distributions, by optimizing the parameter distribution directly. More details are provided in Section~\ref{supp:related} of the SM.
	
	%
	%

		\begin{table}[tb]
	\centering
	\begin{adjustbox}{scale=0.80,tabular=cccc,center}
		\toprule[1.2pt]
		Dataset & PBP & SVGD& WGF \\
		\midrule
		Boston & -2.57 $\pm$ 0.09&-2.50$\pm$0.03&$\mathbf{ -2.40 \pm 0.10 }$\\
		Concrete &-3.16 $\pm$ 0.02&-3.08$\pm$0.02&$\mathbf{ -2.95 \pm 0.06 }$\\
		Energy &-2.04 $\pm$ 0.02&-1.77$\pm$0.02&$\mathbf{ -0.73 \pm 0.08 }$\\
		Kin8nm&0.90 $\pm$ 0.01&$\mathbf{ 0.98\pm0.01}$& 0.97 $\pm$ 0.02 \\
		Naval&3.73 $\pm$ 0.01&4.09$\pm$0.01&$\mathbf{ 4.11 \pm 0.02 }$\\
		CCPP&-2.80 $\pm$ 0.05&-2.82$\pm0.01$&$\mathbf{ -2.78 \pm 0.01 }$\\
		Winequality &-0.97 $\pm$ 0.01& -0.93$\pm$0.01&$\mathbf{-0.87 \pm 0.04}$ \\
		Yacht&-1.63 $\pm$ 0.02&-1.23$\pm$0.04&$\mathbf{ -0.99 \pm 0.15 }$\\
		Protein& -2.97 $\pm$ 0.00& -2.95$\pm$0.00& $\mathbf{-2.88 \pm 0.01}$\\
		YearPredict&-3.60$\pm$NA&-3.58 $\pm$ NA & $\mathbf{ -3.57 \pm NA }$\\
		\bottomrule[1.2pt]
	\end{adjustbox}
	\vspace{-0.2cm}
	\caption{Averaged predictions, with standard deviations, in terms of test log-likelihood.}
	\label{tab:reg}
	\vspace{-0.8cm}
\end{table}
\vspace{-0.3cm}
\section{Experiments}
We test the proposed WGF framework from two perspectives: $\RN{1})$ the effectiveness of the proposed particle-approximation method for WGF, and $\RN{2})$ the advantages of the WGF framework for policy optimization. For $\RN{1})$, a standard regression model to learn optimal parameter distributions, {\it i.e.}, posterior distributions. For $\RN{2})$, we test our algorithms on several domains in OpenAI $\mathtt{rllab}$ and $\mathtt{Gym}$ \citep{duan2016benchmarking}. 
All experiments are conducted on a single Tesla P100. Detailed settings are given in the SM.

\begin{figure}[ht] \centering
	\begin{tabular}{cc}
		\hspace{-6mm}
		\includegraphics[width=4.4cm]{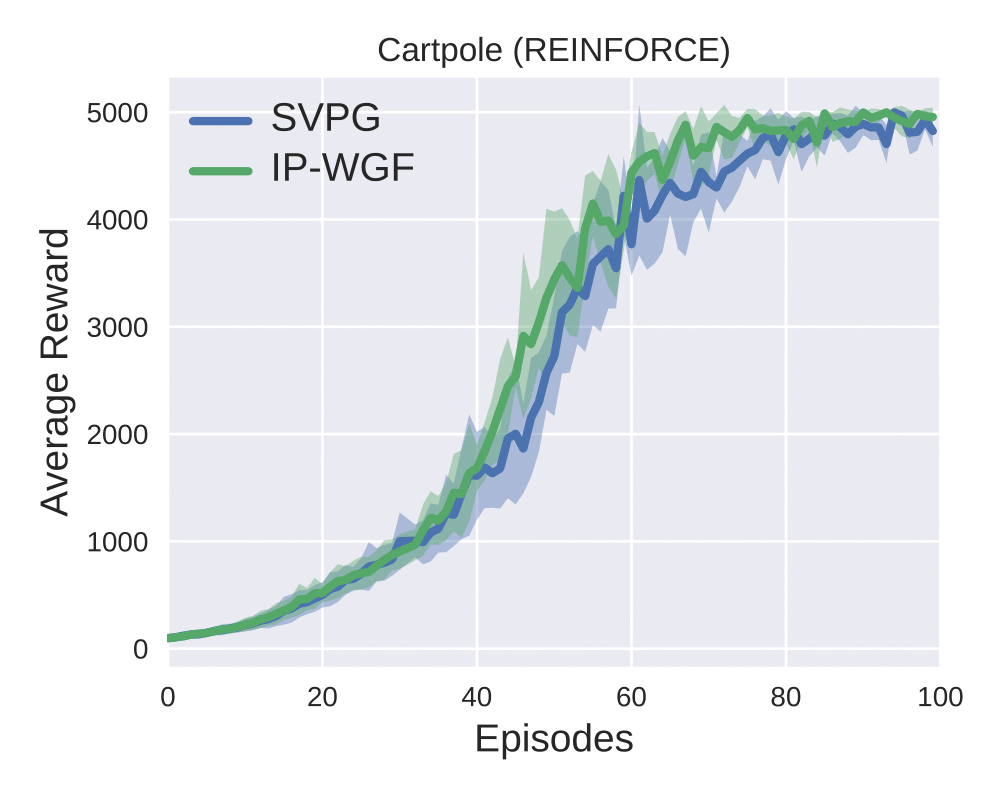}  &
		\hspace{-6mm}
		\includegraphics[width=4.4cm]{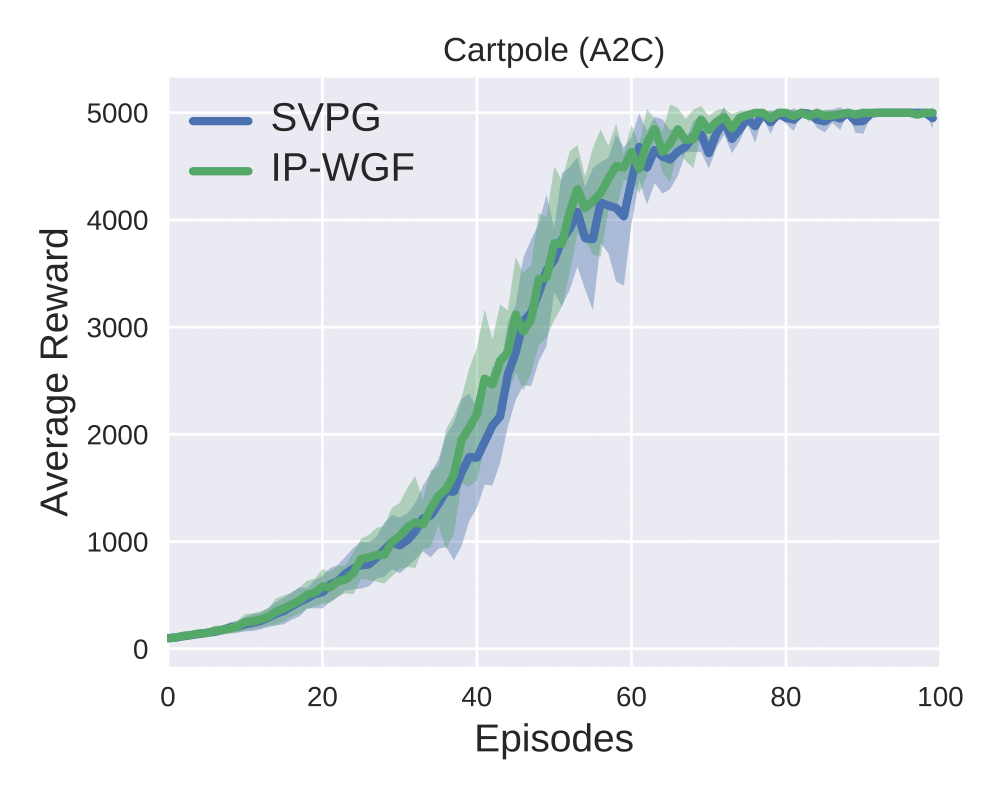}
		\vspace{-3mm}
		\\
		\hspace{-6mm}
		\includegraphics[width=4.4cm]{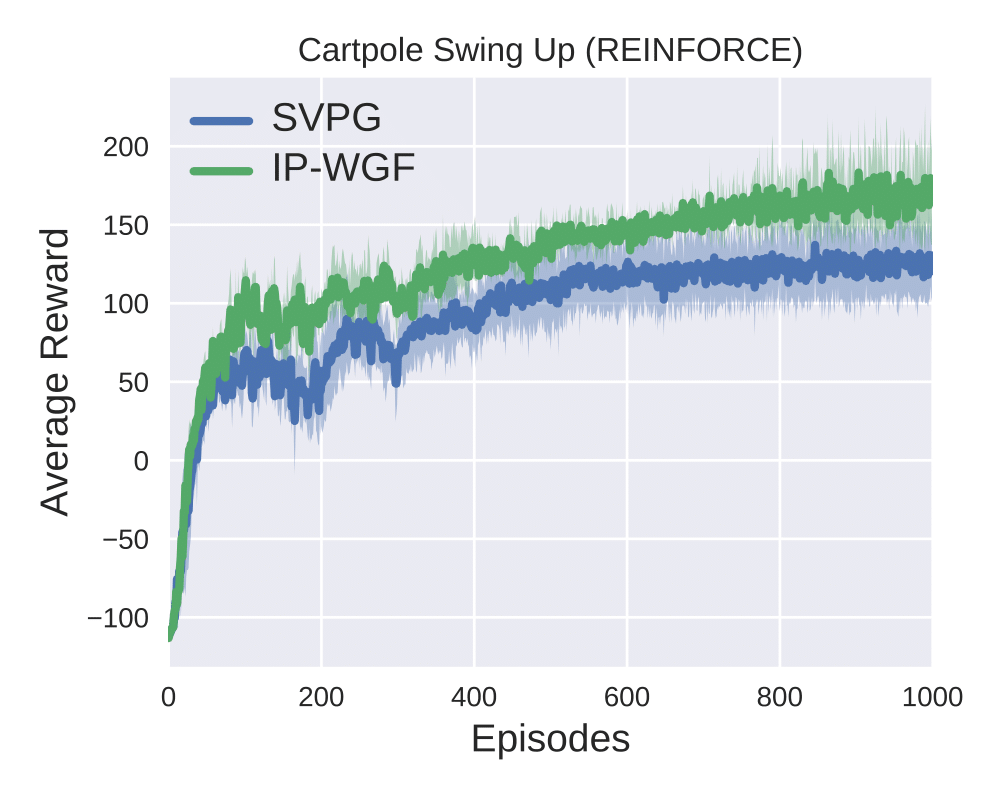}   &
		\hspace{-6mm}
		\includegraphics[width=4.4cm]{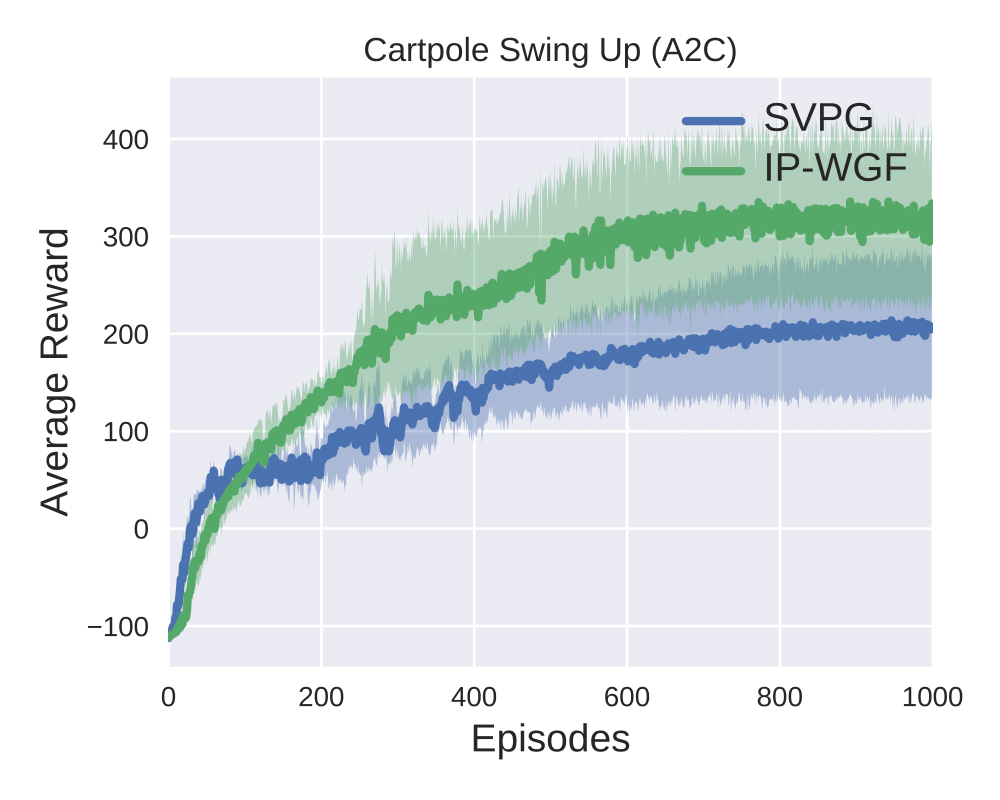}
		\vspace{-3mm}
		\\
		\hspace{-6mm}
		\includegraphics[width=4.4cm]{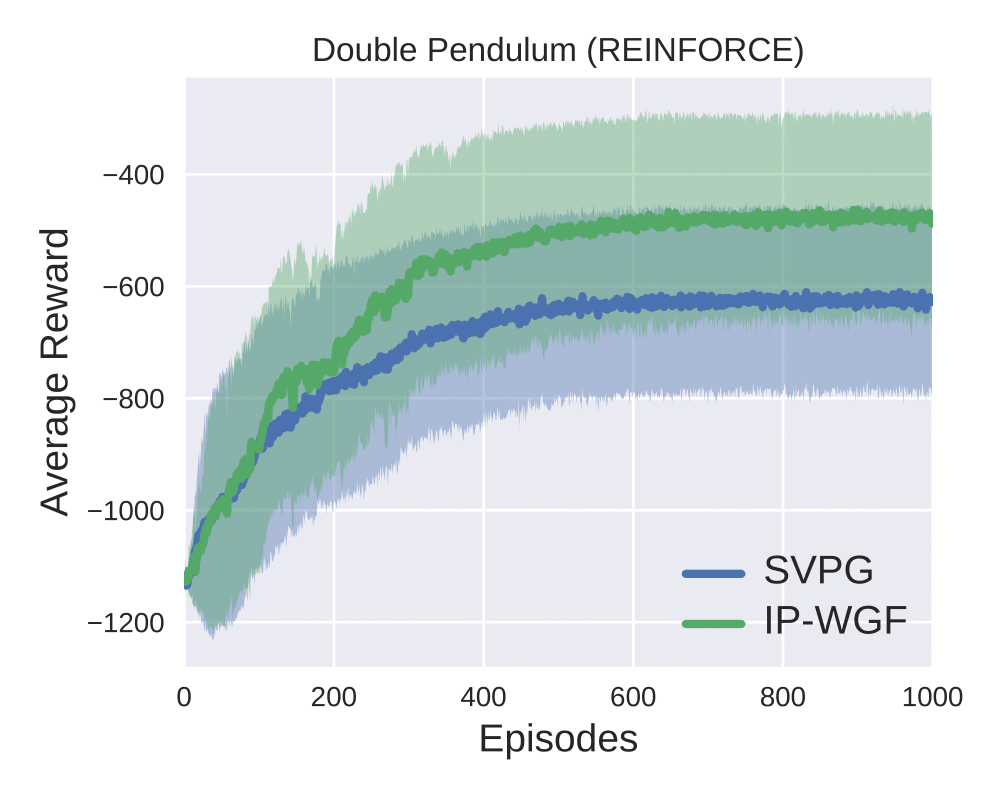} &
		\hspace{-6mm}
		\includegraphics[width=4.4cm]{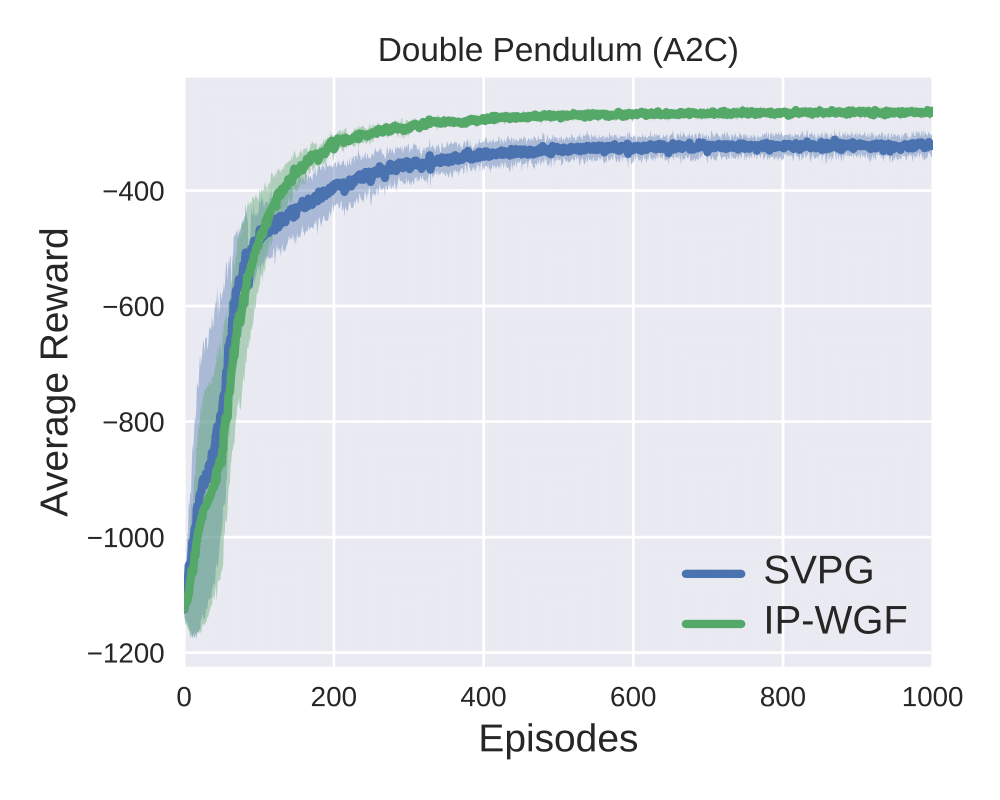}
		\\
	\end{tabular} \vspace{-6mm}
	\caption{{\small Learning curves by IP-WGF and SVPG with REINFORCE and A2C. }}
	\label{fig:rlSVGD0}
	\vspace{-0.5cm}
\end{figure}
\begin{table*}[ht]
	\centering 
	\footnotesize
	\begin{tabular}{c|c| c c| c c | c c | c c}
		\toprule[1.2pt]
		\multicolumn{2}{c}{} & \multicolumn{2}{c}{WGF-DP-V} & \multicolumn{2}{c}{SAC} & \multicolumn{2}{c}{TRPO-GAE} & \multicolumn{2}{c}{DDPG}\\
		\hline
		Domain & Threshold & MaxReturn. & Episodes & MaxReturn & Epsisodes &  MaxReturn & Episodes & MaxReturn & Episodes \\
		\hline
		Swimmer & 100 & \textbf{181.60} & \textbf{76} & 180.83 & 112 &  110.58 & 433  & 49.57 & N/A\\
		Walker & 3000 & \textbf{4978.59} & \textbf{2289} & 4255.05 & 2388 &  3497.81 & 3020 & 2138.42& N/A\\
		Hopper & 2000 & \textbf{3248.76} & \textbf{678} & 3146.51 & 736 & 2604 & 1749 & 1317 & N/A \\
		Humanoid & 2000 & 3077.84 & \textbf{18740} & 2212.51  & 26476 & \textbf{5411.15}& 32261 & 2230.60& 34652 \\
		\bottomrule[1.2pt]
	\end{tabular}
	\vspace{-0.3cm}
	\caption{WGF-DP-V, TRPO, SAC, and DDPG results showing the max average rewards attained and the episodes to cross specific reward thresholds. WGF-DP-V often learn more sample-efficiently than the baselines, and WGF-DP-V can solve difficult domains such as Humanoid better than DDPG. }
	\label{tab:wgf_results}
	\vspace{-0.3cm}
\end{table*}
\begin{figure*}[h!] \centering
	\begin{tabular}{cccc}
		\hspace{-4mm}
		\includegraphics[width=4.4cm]{./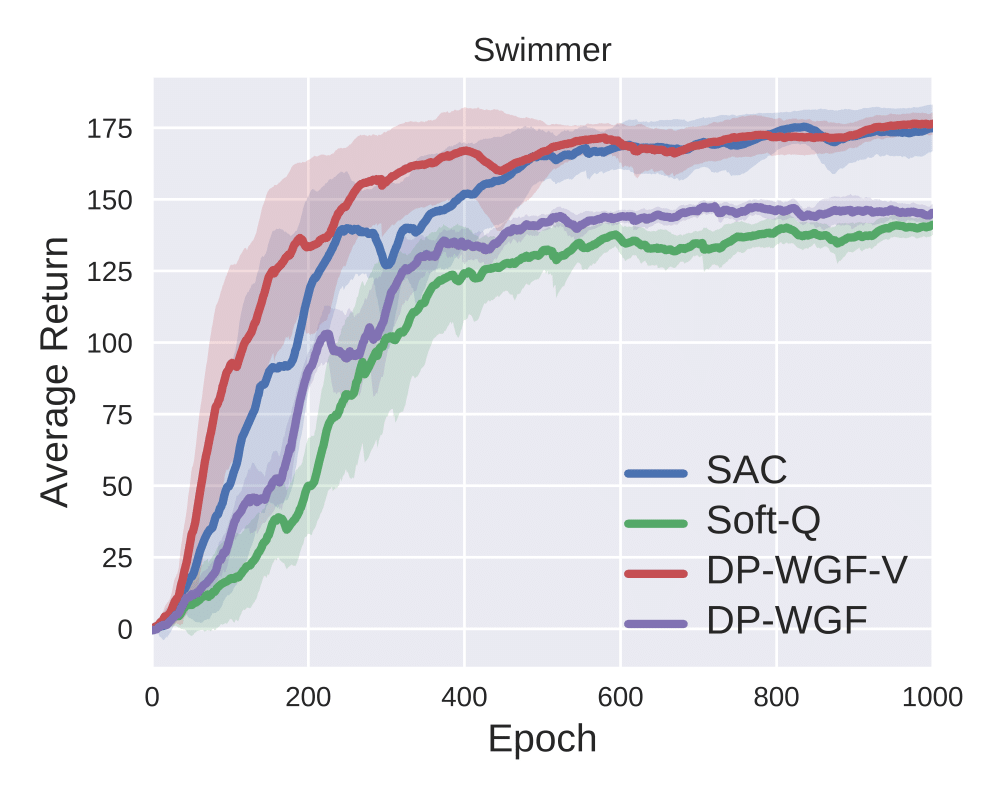}  
		&   \hspace{-6mm}
		\includegraphics[width=4.4cm]{./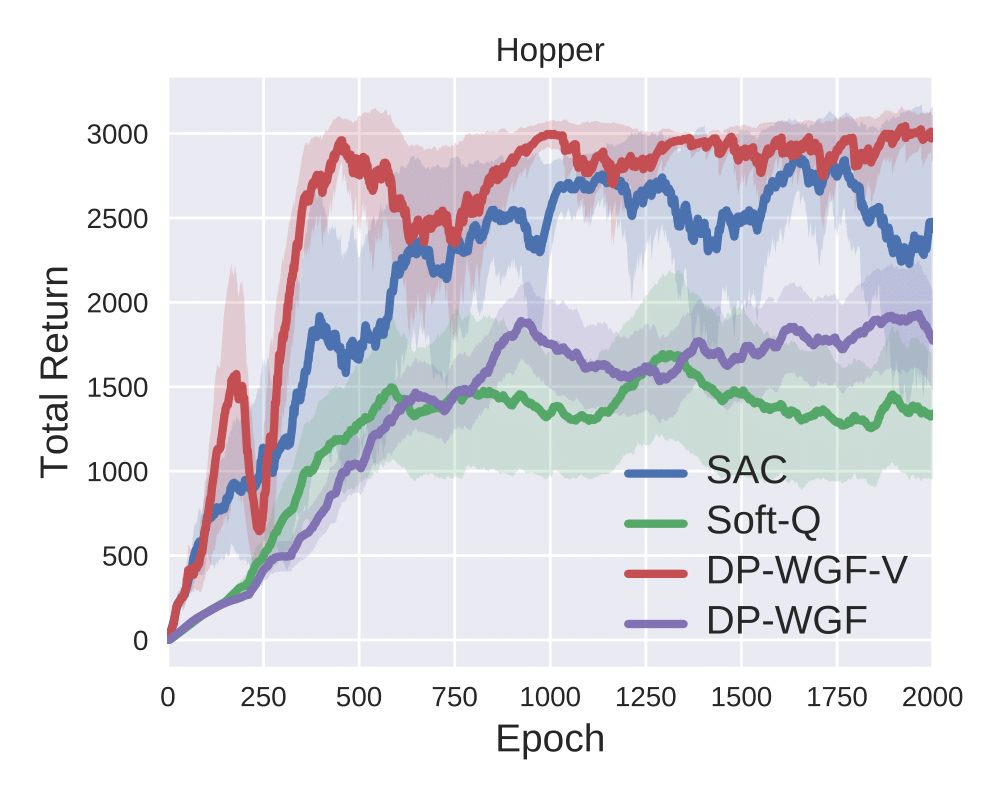} 
		& 		\hspace{-6mm}
		\includegraphics[width=4.4cm]{./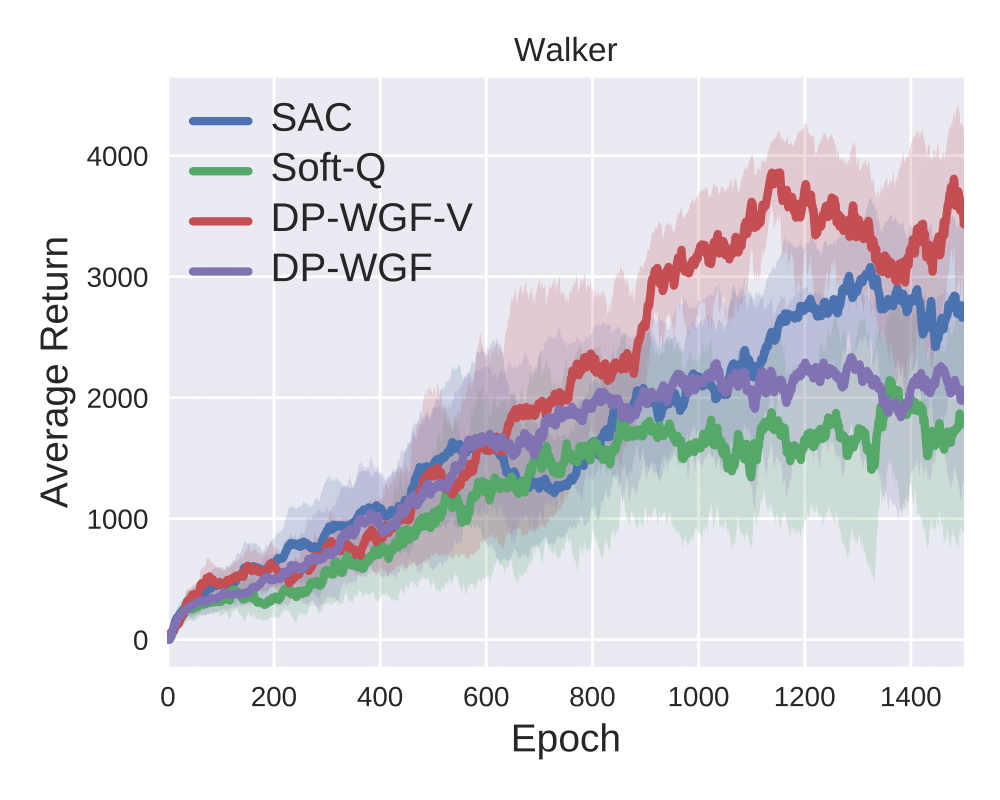}   
		&		\hspace{-6mm}
		\includegraphics[width=4.4cm]{./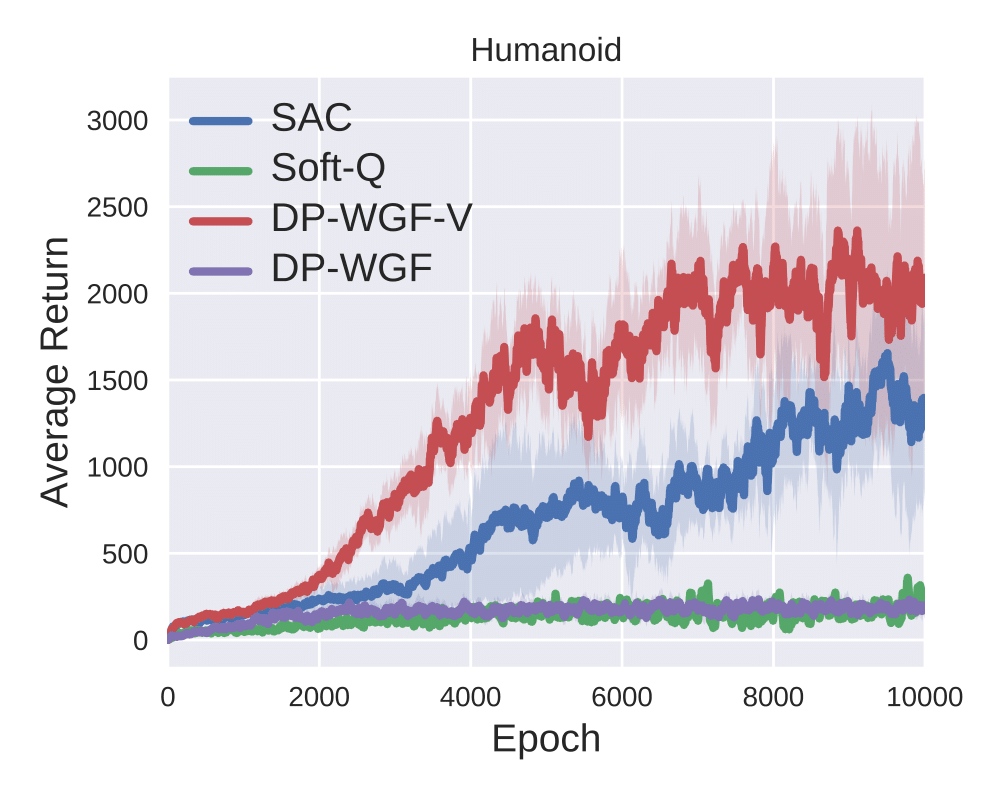}   
		\vspace{-3mm}
		\\
		\hspace{-4mm}
		\includegraphics[width=4.4cm]{./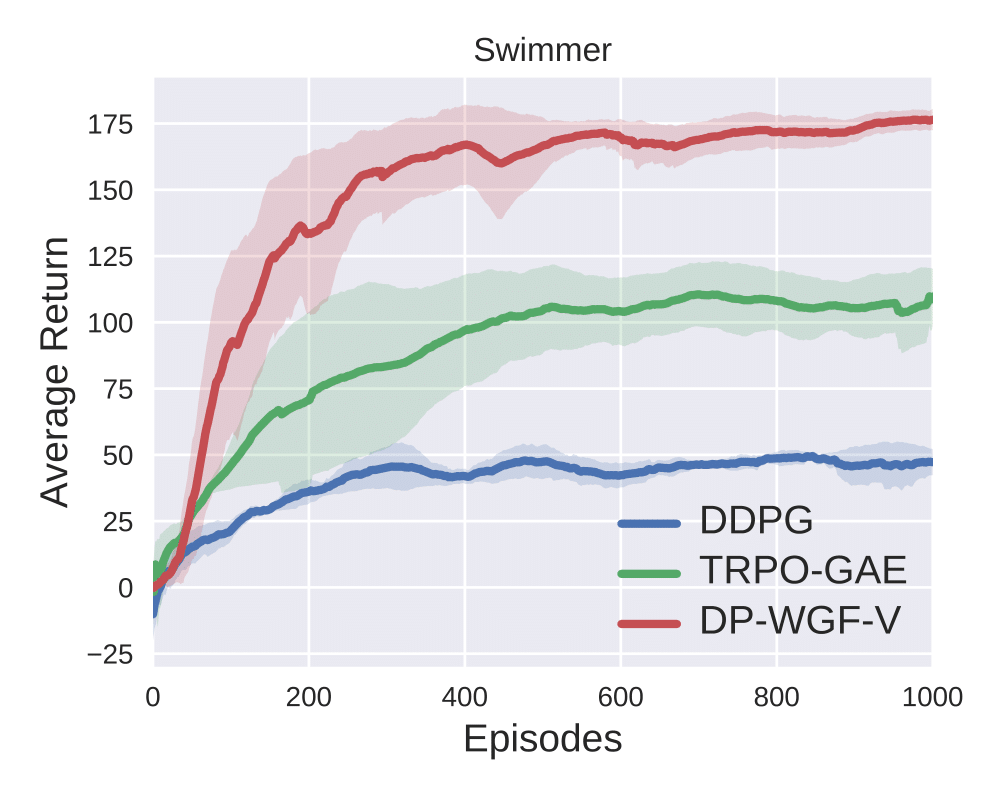}  
		&   \hspace{-6mm}
		\includegraphics[width=4.4cm]{./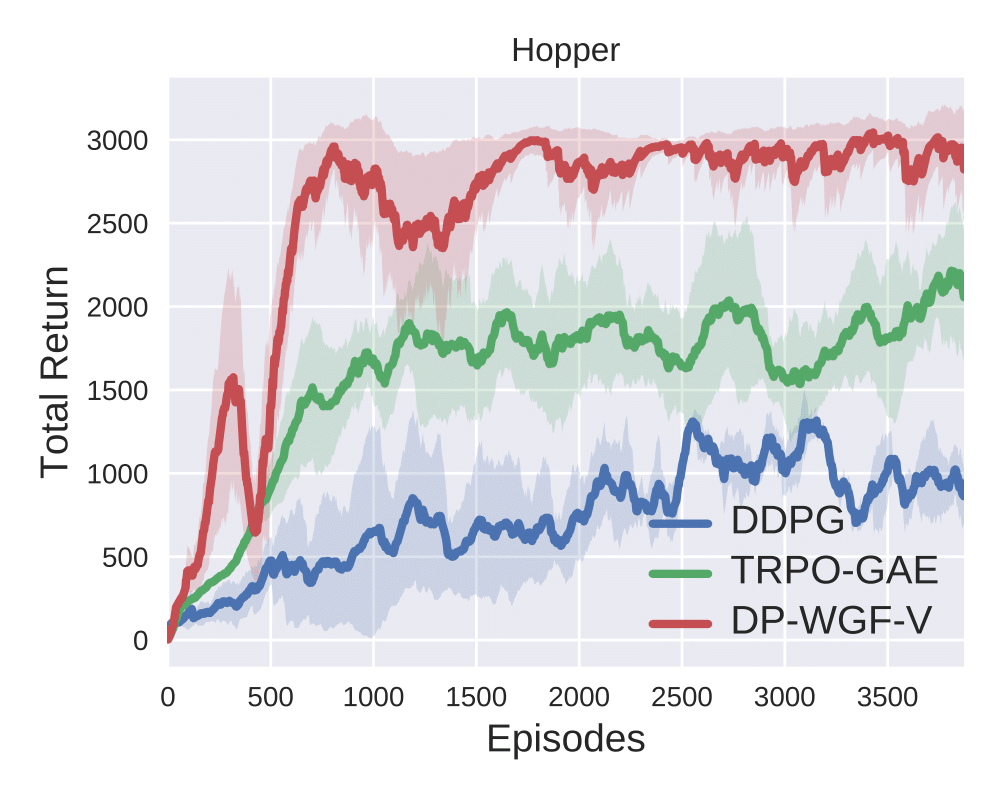} 
		& 		\hspace{-6mm}
		\includegraphics[width=4.4cm]{./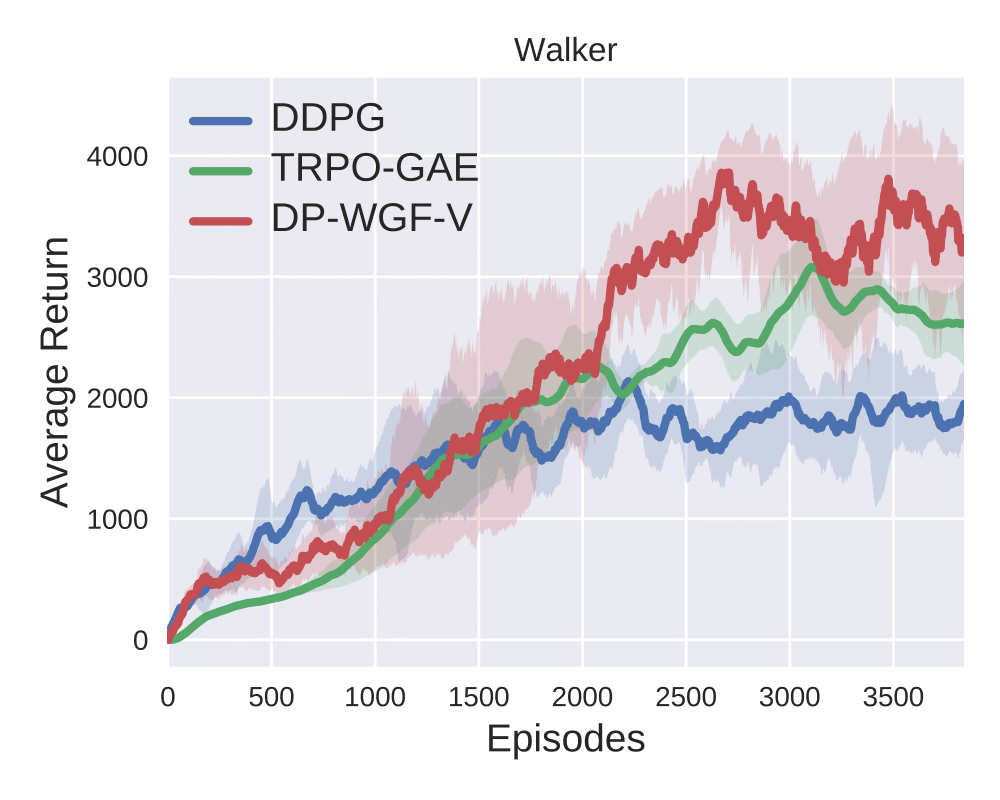}   
		&		\hspace{-6mm}
		\includegraphics[width=4.4cm]{./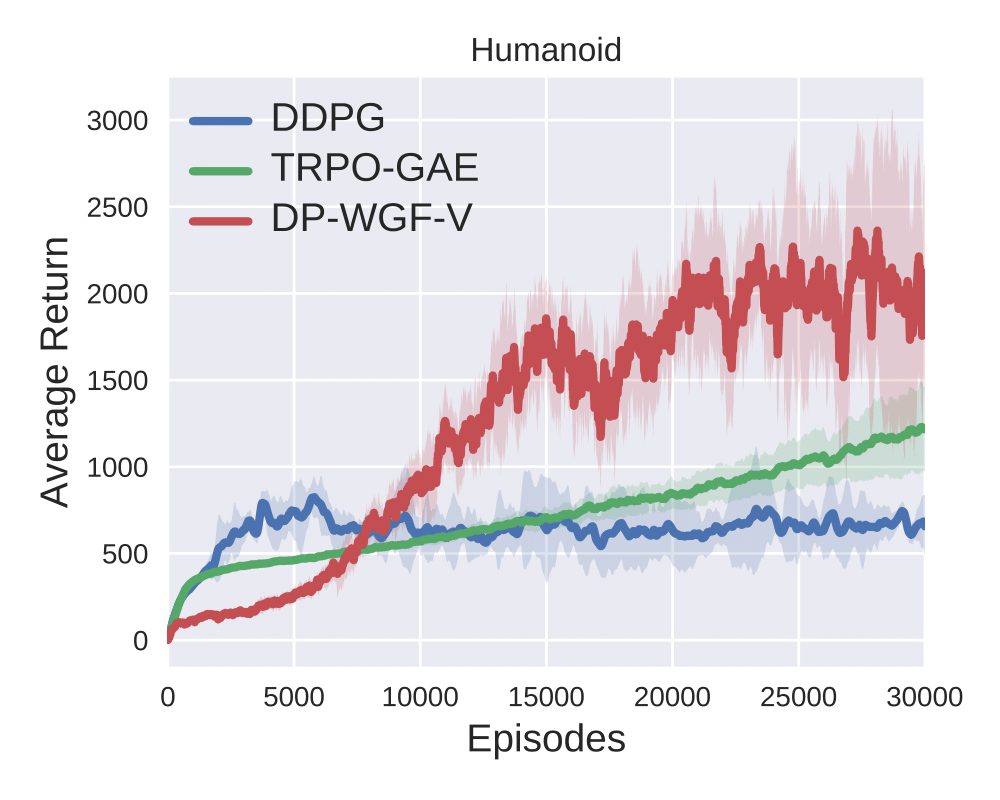} 
	\end{tabular} \vspace{-7mm}
	\caption{{\small Average return in MuJoCo tasks by Soft-Q, SAC and DP-WGF-V (first row), and by DDPG, TRPO-GAE and DP-WGF-V (second row). From left to right, the tasks are: Swimmer, Hopper, Walker and Humanoid, respectively.}}
	\label{fig:rlSVGD1}
	\vspace{-5mm}
\end{figure*}
\vspace{-0.2cm}
\subsection{Regression}
\vspace{-0.2cm}
We use a single-layer BNN as a regression model. The parameters of the BNN are treated probabilistically and optimized with our WGF framework. We compare WGF, SVGD \cite{LiuW:NIPS16}, Bayesian Dropout \cite{GalG:ICML16} and PBP \cite{LobatoA:ICML15}. The RMSprop optimizer is employed. Detailed experimental settings and datasets are described in Section~\ref{supp:exp1} of the SM. We adopt the root-mean-squared error (RMSE) and test log-likelihood as the evaluation criteria. The experimental results are shown in Table~\ref{tab:reg} (complete results are provided in Section~\ref{supp:exp1} of the SM). It is observed that our proposed WGF obtains better results in both metrics, partially due to the flexibility of our particle approximation algorithm, which solves the original WGF problem effectively.

\vspace{-0.2cm}
\subsection{Indirect-policy learning}\vspace{-0.2cm}
For this group of experiments, we compare IP-WGF with SVPG \cite{liu2017stein}, a state-of-the-art method for indirect-policy learning, considering three classical continuous-control tasks: Cartpole Swing-Up, Double Pendulum, and Cartpole. Only policy parameters are updated by IP-WGF or SVPG, while the critics are updated with TD-error. We train our agents for 100 iterations on the easier Cartpole domain and 1000 iterations on the other two domains. 
Following the settings in \citep{liu2017stein, houthooft2016vime, zhang2017learning}, the policy is parameterized as a two-layer (25-16 hidden units) neural network with $\mathtt{tanh}$ activation function. The maximum horizon length is set to 500. A sample size of 5000 is used for policy gradient estimation. We use $M=16$ particles to approximate parameter distributions, and $h=0.1$ as the discretized stepsize. 

REINFORCE \citep{williams1992simple} and advantage actor critic~(A2C) \citep{schulman2015high} are used as strategies of policy learning.  Figure~\ref{fig:rlSVGD1} plots the mean (dark curves) and standard derivation (light areas) of rewards over 5 runs. It is clear that in all tasks IP-WGF consistently converges faster than SVPG, and finally converges to higher average rewards. The results are comparable to \citep{houthooft2016vime}.
The experiments demonstrate that employing the Wasserstein gradient flows on policy optimization improves the performance, as suggested by our theory.
%
\subsection{Direct-policy Learning}\vspace{-0.2cm}
We compare our DP-WGF and DP-WGF-V frameworks with existing off-policy and on-policy deep RL algorithms on several tasks in MuJoCo, {\it e.g.}, SAC, Soft-Q, DDPG (off-policy) and TRPO-GAE (on-policy). Our DP-WGF-V is considered to be an off-policy actor-critic method. For all methods, value function and policy are parameterized as two-layer (128-128 hidden units) neural networks with $\mathtt{tanh}$ as the activation function. The maximum horizon length is set to 1000 when simulating expected total rewards. Three easier tasks (Swimmer, Hopper and Walker) in MuJoCo can be solved by a wide range of algorithms; while the more complex benchmark, the 21-dimensional Humanoid, is known to be very difficult to solve with off-policy algorithms \citep{duan2016benchmarking}. Implementation details of the algorithms are specified in Section~\ref{supp:DPL} of the SM.

\vspace{-4mm}
\paragraph{Effectiveness of the Wasserstein trust-region}
We evaluate DP-WGF-V against SAC, and DP-WGF against Soft-Q on four Mujoco tasks, as they are closely related to our algorithms. Figure~\ref{fig:rlSVGD1} (first row) plots average returns over epochs on the tasks. Similarly, our WGF-based methods converge faster and better than their counterparts due to the introduction of WGFs. Furthermore, by variance reduction, DP-WGF-V significantly outperforms DP-WGF on all tasks.
\vspace{-0.8cm}
\paragraph{Comparisons with popular baselines}
Finally we compare DP-WGF-V with TRPO-GAE \cite{schulman2015high} and DDPG \cite{LillicrapHPHETSW:ICLR16} on the same Mujoco tasks. In general, TRPO-GAE has been a state-of-the-art method for policy optimization. Figure~\ref{fig:rlSVGD1} (second row) plots average returns over episodes, and it is observed that DP-WGF-V consistently outperforms other algorithms. 
Table~\ref{tab:wgf_results} summarizes some key statistics, including the best attained average rewards and the episodes to reach the reward thresholds. It is observed that DP-WGF-V consistently outperform TRPO-GAE and DDPG in terms of sample complexity, and often achieves higher rewards than TRPO-GAE. A particularly notable case, on Humanoid, shows DP-WGF-V substantially outperforms TRPO-GAE in terms of sample efficiency, while DDPG cannot learn a good policy at all.

	\vspace{-3mm}
	\section{Conclusion}
	\vspace{-0.6mm}
	We lift policy optimization to the space of probabilistic distributions, and interpret it as Wasserstein gradient flows. Two types of WGFs are defined for the task, one on the parameter-distribution space and the other on the policy-distribution space. The WGFs are solved by a new particle-approximation-based algorithm, where gradients of particles are calculated in closed forms. Under some circumstance, optimization on probability-distribution space is convex, thus it is easier to deal with compared to existing methods\footnote{In parallel to our work, \cite{richemond2017wasserstein} explores Wasserstein gradient flows for diffusing policy.}. Experiments are conducted on a number of reinforcement-learning tasks, demonstrating the superiority of the proposed framework compared to related algorithms.
	

	\paragraph{Acknowledgements}
	We acknowledge Tuomas Haarnoja et al. for making their code public and thank Ronald Parr for insightful advice. This research was supported in part by DARPA, DOE, NIH, ONR and NSF.
	\nocite{Tao2018glbo}
	
	\bibliography{reference}
	\bibliographystyle{icml2018}

	%
	%
	%
	\appendix
	
	\twocolumn[
	\icmltitle{\Large Supplemental Material for \\ Policy Optimization as Wasserstein Gradient Flows}
	]
	\section{More Details on Preliminaries}
	We provide more details on some parts of the preliminaries section in the main text.
	\subsection{Gradient Flows on the Euclidean Space}\label{sec:gf_euc1}
	For a smooth (convex) function\footnote{We will focus on the convex case since this the case for many gradient flows on the space of probability measures, detailed later.} $F: \mathbb{R}^n \rightarrow \mathbb{R}$, a starting point $\xb_0 \in \mathbb{R}^n$. The gradient flow of $F(\xb)$ is defined as the solution of the following function:
	\begin{align}\label{eq:gf_euc1}
	\left\{
	\begin{array}{ll}
	\frac{\mathrm{d}\xb}{\mathrm{d}t} = -\nabla F(\xb(t)), & \text{for }t > 0\\
	\xb(0) = \xb_0
	\end{array}
	\right.
	\end{align}
	This is a standard Cauchy problem \cite{Rulla:NA96}, which has a unique solution if $\nabla F$ is Lipschitz continuous. When $F$ is non-differentiable, we can replace the gradient with the subgradient, defined as
	$\partial F(\xb) \triangleq \{\pp \in \mathbb{R}^n: F(\yb) \geq F(\xb) + \pb \cdot (\yb - \xb), \forall \yb \in \mathbb{R}^n\}$. Note $\partial F(\xb) = \{\nabla F(\xb)\}$ if $F$ is differentiable at $\xb$. In this case, the gradient flow formula above is replaced as: $\frac{\mathrm{d}\xb}{\mathrm{d}t} \in -\partial F(\xb(t))$.
	
	\paragraph{Numerical solution}
	Exact solution to the gradient-flow problem \eqref{eq:gf_euc1} is typically intractable. Numerical methods is a default choice. A standard method to solve \eqref{eq:gf_euc1} is called {\em Minimizing Movement Scheme} (MMS), which is an iterative scheme that evolves $\xb$ along the gradient of $F$ on the current point for a small step in each iteration. Specifically, let the current point to be $\xb_k$, the next point is defined as $\xb_{k+1} = \xb_k - \nabla F(\xb_{k+1}) h$, with $h$ being the stepsize. Note $\xb_{k+1}$ is equivalent to solving the following optimization problem:
	\begin{align*}
	\xb_{k+1} = \arg\min_{\xb}F(\xb) + \frac{\|\xb - \xb_k\|^2}{2h}~.
	\end{align*}
	
	To explicitly spell out the dependency of $\xb_k$ w.r.t.\! $h$, we rewrite $\xb_k$ as $\xb_k^{(h)}$. The numerical scheme can be proved to be accurate. Specifically, define $\vb_{k+1}^{(h)} \triangleq \frac{\xb_{k+1}^{(h)} - \xb_{k}^{(h)}}{h}$. Also define two curves $\xb^h, \tilde{\xb}^h: [0, T] \rightarrow \mathbb{R}^n$ for  $t \in (kh, (k+1)h]$ as:
	\begin{align*}
	\xb^h(t) = \xb_{k+1}^{(h)},~~ \tilde{\xb}^{h}(t) = \xb_k^{(h)} + (t - kh)\vb_{k+1}^{(h)}~.	
	\end{align*}
	The MMS is proved to converge to the original gradient flow \cite{Ambrosio:book05}, stated in the following theorem.
	\begin{theorem}
		Let $\vb^h(t) \triangleq \vb_{k+1}^{(h)}$ defined above. Suppose $F(\xb_0) < +\infty$ and $\inf F > -\infty$. If\footnote{$h$ can also be a decreasing-stepsize sequence $\{h_k\}$ such that $h_k \rightarrow 0$.} $h \rightarrow 0$, $\tilde{\xb}^h$ and $\xb^h$ converge uniformly to a same curve $\xb(t)$, and $\vb^h$ weakly converges in $\mathcal{L}^2$ to a vector function $\vb(t)$, such that $\frac{\mathrm{d}\xb}{\mathrm{d}t} = \vb$. Furthermore,
		if the partial derivatives of $F$ exist and are continuous, we have $\vb(t) = -\nabla F(\xb(t))$ for all $t$.
	\end{theorem}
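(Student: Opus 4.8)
The plan is to run the classical De Giorgi \emph{minimizing movements} argument: extract a priori bounds from the one-step optimality condition, use them to obtain compactness of the interpolated trajectories and their velocities, and only at the very end push the nonlinearity $\nabla F$ through the limit. First I would record the Euler--Lagrange condition for the one-step problem $\xb_{k+1}^{(h)} = \arg\min_{\xb} F(\xb) + \|\xb-\xb_k^{(h)}\|^2/(2h)$. Existence of a minimizer follows because the quadratic penalty makes the objective coercive while $\inf F > -\infty$ keeps it bounded below, and convexity gives uniqueness. Stationarity then reads $\nabla F(\xb_{k+1}^{(h)}) + (\xb_{k+1}^{(h)} - \xb_k^{(h)})/h = 0$, i.e.\ $\vb_{k+1}^{(h)} = -\nabla F(\xb_{k+1}^{(h)})$, so that $\vb^h(t) = -\nabla F(\xb^h(t))$ for a.e.\ $t$. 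This identity is the seed for the final step.

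The next step is the \textbf{energy estimate}. Testing the minimization with the competitor $\xb = \xb_k^{(h)}$ gives the dissipation inequality $F(\xb_{k+1}^{(h)}) + \tfrac{h}{2}\|\vb_{k+1}^{(h)}\|^2 \le F(\xb_k^{(h)})$. Summing over $k$ telescopes the energy, and using $\inf F > -\infty$ I obtain $\int_0^T \|\vb^h(\tau)\|^2\,\mathrm{d}\tau = h\sum_k \|\vb_{k+1}^{(h)}\|^2 \le 2\bigl(F(\xb_0)-\inf F\bigr) =: C$, a bound uniform in $h$. Hence $\{\vb^h\}$ is bounded in $\mathcal{L}^2([0,T];\RR^n)$.

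From this bound compactness follows. Cauchy--Schwarz yields, for $s<t$, $\|\tilde{\xb}^h(t)-\tilde{\xb}^h(s)\| \le \int_s^t \|\vb^h\| \le \sqrt{C}\,\sqrt{t-s}$, so the piecewise-linear interpolants $\{\tilde{\xb}^h\}$ are uniformly $\tfrac12$-H\"older; being uniformly bounded, Arzel\`a--Ascoli extracts a subsequence converging uniformly to a continuous curve $\xb(t)$. The piecewise-constant interpolant shares this limit, since $\sup_t \|\xb^h(t)-\tilde{\xb}^h(t)\| \le \max_k \|\xb_{k+1}^{(h)}-\xb_k^{(h)}\| = \max_k h\|\vb_{k+1}^{(h)}\| \le \sqrt{hC} \to 0$, where the last estimate uses $h\|\vb_{k+1}^{(h)}\|^2 \le C$ for the single increment. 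Reflexivity of $\mathcal{L}^2$ gives a further subsequence with $\vb^h \rightharpoonup \vb$. To identify $\dot{\xb}=\vb$, I would pass to the limit in $\tilde{\xb}^h(t) = \xb_0 + \int_0^t \vb^h(\tau)\,\mathrm{d}\tau$: uniform convergence of the left side together with weak $\mathcal{L}^2$ convergence tested against $\mathbf{1}_{[0,t]}$ gives $\xb(t) = \xb_0 + \int_0^t \vb(\tau)\,\mathrm{d}\tau$, so $\xb$ is absolutely continuous with $\dot{\xb}=\vb$ a.e.

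\textbf{The hard part} is the final identification $\vb = -\nabla F(\xb)$, which is exactly where continuity of $\nabla F$ is needed. The obstacle is that $\vb^h$ is controlled only \emph{weakly}, while $\nabla F$ is nonlinear, so weak convergence of $\xb^h$ alone cannot be transferred across it. The resolution is that the trajectory convergence is in fact \emph{uniform}: since $\xb^h \to \xb$ uniformly on $[0,T]$, the images all lie in a fixed compact set on which $\nabla F$ is uniformly continuous, whence $\vb^h = -\nabla F(\xb^h) \to -\nabla F(\xb)$ uniformly, hence strongly in $\mathcal{L}^2$. Uniqueness of limits then forces the weak limit to coincide, giving $\vb(t) = -\nabla F(\xb(t))$ and therefore $\dot{\xb}(t) = -\nabla F(\xb(t))$. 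The compactness arguments only deliver a subsequence, but when the limiting Cauchy problem is well-posed (e.g.\ $\nabla F$ Lipschitz, as noted earlier in the paper) the limit is unique, so the whole family converges, completing the proof.
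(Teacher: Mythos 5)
Your argument is correct and is exactly the classical De Giorgi minimizing-movements proof that the paper itself does not reproduce but simply defers to the cited reference (Ambrosio et al.): competitor test for the dissipation bound, uniform $\mathcal{L}^2$ control of the discrete velocities, Arzel\`a--Ascoli plus weak compactness, and identification of the limit velocity via uniform convergence of $\xb^h$ through the continuous $\nabla F$. Your closing remark about upgrading subsequential to full-family convergence via well-posedness of the limit ODE is a point the paper's statement glosses over, and it is handled correctly.
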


	\section{Sketch Proofs for RL with WGF}
	
	\begin{proof}[Proof of Proposition~\ref{prop:IPWGF}]
		
		We provide two methods for the proof.
		
		The first method directly uses property of gradient flows. Note that the WGF is defined as
		\begin{align*}
		\partial_{\tau} \mu_{\tau} &= -\nabla \cdot (\vb_{\tau} \mu_{\tau}) = \nabla \cdot \left(\mu_{\tau} \nabla(\frac{\delta F}{\delta \mu_{\tau}}(\mu_{\tau}))\right)\\
		& \triangleq -\nabla_{W}F(\mu_{\tau})~.
		\end{align*}
		Denote the inner product in the probability space induced by $W_2$ as $\langle\cdot,\cdot\rangle_{W}$, we have
		\begin{align}\label{eq:w_inner}
		\frac{\mathrm{d}}{\mathrm{d}\tau}F(\mu_{\tau}) &= \langle\nabla_{W}F(\mu_{\tau}), \frac{\mathrm{d}}{\mathrm{d}\tau}\mu_{\tau}\rangle_{W} \nonumber\\
		&= -\langle\nabla_{W}F(\mu_{\tau}), \nabla_{W}F(\mu_{\tau})\rangle_{W}~.
		\end{align}
		
		For any $\tau_1 \geq \tau_0$, integrating \eqref{eq:w_inner} over $[t_0, t_1]$, we have
		\begin{align*}
		&F(\mu_{\tau_1}) - F(\mu_{\tau_0}) \\
		=& -\int_{\tau_0}^{\tau_1}\langle\nabla_{W}F(\mu_{\tau}), \nabla_{W}F(\mu_{\tau})\rangle_{W} \mathrm{d}\tau \leq 0~,
		\end{align*}
		where the last inequality holds due to the positiveness of the norm operator. Consequently, we have $F(\mu_{\tau_1}) \leq F(\mu_{\tau_0})$, which means the energy functional $F$ decreases over time. In our case, the energy functional is defined as the KL divergence, which is convex in terms of distributions. As a result, evolving $\mu_{\tau}$ along the gradient flow would reach the global minimum of the energy functional, {\it i.e.}, $\lim_{\tau\rightarrow \infty}\mu_{\tau} = e^{J(\pi_{\thetab})}$.
		
		The second way uses property of the Fokker-Planck equation for diffusions. Since the WGF with energy functional in \eqref{eq:imp_energy} is equivalent to a Fokker-Planck equation. Specifically, according to Section~\ref{sec:wgf}, the solution of the gradient flow is described by the following Fokker-Planck equation:
		\begin{align}\label{eq:fp1}
		\partial_{\tau} \mu_{\tau} = \nabla\cdot \left(-\mu_{\tau}\nabla J(\pi_{\thetab}) + \nabla\cdot\left(\mu_{\tau}\right)\right)~,
		\end{align}
		On the other hand, it is well known that the unique invariant probability measure for the FP equation \eqref{eq:fp1} is:
		\begin{align*}
		\mu = e^{J(\pi_{\thetab})} = \lim_{\tau\rightarrow \infty}\mu_{\tau}~.
		\end{align*}
		This completes the proof.
	\end{proof}
	
	\begin{proof}[Proof of Proposition~\ref{prop:dpwgf}]
		The first part of Proposition~\ref{prop:dpwgf}, stating that $\pi(\ab|\sbb)$ converges to $p_{s, \pi}(\ab) \propto e^{Q(\ab, \sbb)}$, follows by the same argument as the proof of Proposition~\ref{prop:IPWGF}. Now we derive the soft Bellman equation.
		
		This follows from the definition of $Q(\ab_t, \sbb_t)$, {\it i.e.},
		\begin{align*}
		&Q(\ab_t, \sbb_t) \\
		= &r(\ab_t, \sbb_t) + \mathbb{E}_{(\sbb_{t+1}, \ab_{t+1}, \cdots) \sim (\rho_{\pi}, \pi)}\sum_{l=1}	^{\infty}\gamma^lr(\sbb_{t+l}, \ab_{t+l}) \\
		= &r(\ab_t, \sbb_t) + \gamma \mathbb{E}_{\sbb_{t+1} \sim \rho_{\pi}}\mathbb{E}_{\ab_{t+1} \sim \pi}\left[r(\ab_{t+1}, \sbb_{t+1}) \right.\\
		&\left.+ \mathbb{E}_{(\sbb_{t+2}, \ab_{t+2} \cdots) \sim (\rho_{\pi}, \pi)}\sum_{l=1}	^{\infty}\gamma^{l}r(\sbb_{t+1+l}, \ab_{t+1+l})\right]
		\end{align*}
		Since $\pi(\ab|\sbb) = e^{Q(\ab, \sbb) - V_{\pi}(\sbb)}$ where $V_{\pi}(\sbb)  =\int_{\mathcal{A}}Q(\ab, \sbb_{t+1})\mathrm{d}\ab$, we have
		\begin{align*}
		Q&(\ab_t, \sbb_t) = r(\ab_t, \sbb_t) \\
		&+ \gamma \mathbb{E}_{\sbb_{t+1} \sim \rho_{\pi}}\left[V_{\pi}(\sbb_{t+1}) - \mathcal{H}(\pi(\cdot|\sbb_{t+1}))\right]
		\end{align*}
	\end{proof}
	
	\section{More Details on Related Works}\label{supp:related}
	For reference, in addition to Section~\ref{sec:related}, we provide more details on the connection of our framework compared to existing methods.
	\paragraph{Connections with trust region methods}
	Trust Region methods can stabilize policy optimization in RL~\cite{NachumNXS:arxiv17, kakade2002natural}. 
	We can also show the connection between the proposed method and TRPO~\cite{schulman2015trust}. In TRPO, an objective function is maximized subjected to a constraint on the size of policy update. Specifically,
	\begin{align}\label{eq:trpo1}
	\max_{\phib}~~~&\hat{\mathbb{E}}_t\left[\dfrac{\pi^{\phi}(\cdot|\sbb)}{\pi^{\phi_{k-1}}(\cdot|\sbb) }\hat{A}_k\right]\\
	{\rm subject~to}~~~& \hat{\mathbb{E}}_t\left[\mbox{KL}[\pi^{\phi}(\cdot|\sbb), \pi^{\phi_{k-1}}(\cdot|\sbb)]\right]\leq \delta
	\end{align}
	Here, $\pi_{\phi}$ is a stochastic policy; $\phi_{k-1}$ is the vector of policy parameters before the k-th update; $\hat{A}_k$ is an estimator of the advantage function at timestep $k$. The theory of TRPO suggests using a penalty instead of a constraint, {\it i.e.}, solving the unconstrained optimization problem,
	\begin{align}\label{eq:trpo2}
	\max_{\phib}~~~&\hat{\mathbb{E}}_t\left[\dfrac{\pi^{\phi}(\cdot|\sbb)}{\pi^{\phi_{t-1}}(\cdot|\sbb) }\hat{A}_t-\beta \mbox{KL}[\pi^{\phi}(\cdot|\sbb), \pi^{\phi_{t-1}}(\cdot|\sbb)] \right]
	\end{align}
	
	In our proposed framework, the Wasserstein distance between $\pi^{\phi}(\cdot|\sbb)$ and $ \pi^{\phi_{k-1}}(\cdot|\sbb)$, a weaker metric than the KL divergence, constrains the update of a policy on a manifold endowed with the Wasserstein metric, and potentially leads to more robust solutions. This is evidenced by the development of Wasserstein GAN \cite{ArjovskyCB:arxiv17}. As a result, our framework can be regarded as a trust region based counterpart for solving the soft Q-learning problem~\cite{HaarnojaTAL:ICML17}.
	
	\paragraph{Connections with noisy exploration}
	In our framework, adding noise to the parameters, leading to noisy exploration can be interpreted as a special case of indirect policy learning with single particle. As shown in \cite{fortunato2017noisy, plappert2017parameter}, independent Gaussian noisy linear layer is defined as.
	\begin{align}
	y&\eqdef&(\mu^w+\sigma^w\odot\eps^w) x+ \mu^b  +\sigma^b\odot\eps^b ,
	\end{align}
	The parameters $\mu^w\in \mathbb{R}^{q\times p}$, $\mu^b\in \mathbb{R}^{q}$, $\sigma^w\in \mathbb{R}^{q\times p}$ and $\sigma^b\in \mathbb{R}^{q}$ are learnable whereas $\eps^w\in \mathbb{R}^{q\times p}$  and $\eps^b\in \mathbb{R}^{q}$ are  random noises, where $p$ and $q$ are the number of hidden units of connected layers.
	
	It corresponds to the maximum a posterior (MAP) with a Gaussian assumption on the posterior distributions of parameters (weight uncertainty). In our framework, we can explicitly \cite{LiuW:NIPS16} or implicitly \cite{blundell2015weight} define the weight uncertainty. Especially, when employing SGLD~\cite{WellingT:ICML11} to approximate the posterior distributions of parameters, we will using noisy gradient instead of noisy weights in the parameter space. 
	
	Previous work, such as DDPG~\cite{LillicrapHPHETSW:ICLR16}, adding noise to the action to encourage exploration can be regarded as a special case of DP-WGF. Adding noise in parameter space has shown superiority with action space, but the computational cost of employing particles to approximate parameter distribution is much higher than that of directly approximate policy distribution. Previous work \cite{fortunato2017noisy, plappert2017parameter}  made a trade-off and optimize the MAP instead of the distribution.   
	
	\section{Extensive Experiments}
	
	To optimize over the discretized WGF via the JKO scheme \eqref{eq:ito_discrete}, to need to specify the discretized stepsize $h$. In addition, we have an additional hyparameter $\lambda$ in the gradient formula of $W_2^2$. Also note that we can only evaluate the gradient of $W_2^2$ up to a constant, there needs to a parameter balancing the gradient of the energy functional $F$ and the Wasserstein term. We denote this hyparameter as $\epsilon$.
	In the experiments, if not explicitly stated, the default setting for these parameters are $\epsilon = 0.4$, and $\lambda = \mathtt{med}^2/\log M$. Here $\mathtt{med}$ is the median of the pairwise distance between particles of consecutive policies. 
	Adam~\cite{kingma2014adam} optimizer is used for all experiments, except the BNN regression, for which we use RMSPorp~\cite{hinton2012rmsprop}.
	\subsection{Comparative Evaluation}
	DP-WGF-V learns substantially faster than popular baselines on four tasks. In the Humanoid task, even though TRPO-GAE does not outperforms DP-WGF-V within the range depicted in the Figure \ref{fig:rlSVGD1}, it achieves good final rewards after more episodes. The quantitative results in our experiments are also comparable to results reported by other methods in prior work \cite{duan2016benchmarking, gu2016q, henderson2017deep, o2016pgq, mnih2016asynchronous}, showing sample efficiency and good performance. 
	\subsection{BNN for regression}\label{supp:exp1}
	For SVGD-based methods, we use a RBF kernel $\kappa(\thetav,\thetav') = \exp(-\|\thetav-\thetav'\|_2^2/m)$, with the bandwidth set to $m=\mathtt{med}^2/\log M$. Here $\mathtt{med}$ is the median of the pairwise distance between particles. 
	We use a single-layer BNN for regression tasks. Following \cite{blundell2015weight},   10 UCI public datasets are considered: 100 hidden units for 2 large datasets (Protein and YearPredict), and 50 hidden units for the other 8 small datasets. 
	%
	We repeat the experiments 20 times for all datasets except for Protein and YearPredict, which we repeat 5 times and once, respectively, for computation consideration~\cite{blundell2015weight}. 
	The experiment settings are almost identical to those in~\cite{blundell2015weight}, except that the prior of covariances follow $\InvGamma(1, 0.1)$. The batch size for the two large datasets is set to 1000, while it is 100 for the small datasets. The datasets are randomly split into 90\% training and 10\% testing. Table~\ref{tab:reg1} shows the complete results for different models on all the datasets.
	
	\begin{table*}[tb]
		\centering
		\begin{adjustbox}{scale=0.80,tabular=ccccccccc,center}
			& \multicolumn{4}{c}{Test RMSE $\downarrow$} &\multicolumn{4}{c}{Test Log likelihood $\uparrow$} \\
			Dataset & Dropout & PBP & SVGD& WGF & Dropout &  PBP &SVGD & WGF \\
			\hline
			Boston & 4.32 $\pm$ 0.29 & 3.01 $\pm$ 0.18 & 2.96$\pm$0.10&$\mathbf{ 2.46 \pm 0.34 }$ & -2.90 $\pm$ 0.07 & -2.57 $\pm$ 0.09 &-2.50$\pm$0.03&$\mathbf{ -2.40 \pm 0.10 }$\\
			Concrete&7.19 $\pm$ 0.12 &5.67 $\pm$ 0.09&5.32$\pm$0.10&$\mathbf{ 4.59 \pm 0.29 }$ &-3.39 $\pm$ 0.02 &-3.16 $\pm$ 0.02&-3.08$\pm$0.02&$\mathbf{ -2.95 \pm 0.06 }$\\
			Energy&2.65 $\pm$ 0.08&1.80 $\pm$ 0.05&1.37$\pm$0.05&$\mathbf{ 0.48 \pm 0.04 }$  &-2.39 $\pm$ 0.03&-2.04 $\pm$ 0.02&-1.77$\pm$0.02&$\mathbf{ -0.73 \pm 0.08 }$\\
			Kin8nm&0.10 $\pm$ 0.00&0.10 $\pm$ 0.00& $\mathbf{ 0.09 \pm 0.00 }$&$\mathbf{ 0.09 \pm 0.00 }$&0.90 $\pm$ 0.01&0.90 $\pm$ 0.01&$\mathbf{ 0.98\pm0.01}$& 0.97 $\pm$ 0.02 \\
			Naval&0.01 $\pm$ 0.00 &.01$\pm$ 0.00&0.00$\pm0.00$&$\mathbf{ 0.00 \pm 0.00 }$ &3.73 $\pm$ 0.12 &3.73 $\pm$ 0.01&4.09$\pm$0.01&$\mathbf{ 4.11 \pm 0.02 }$\\
			CCPP&4.33 $\pm$ 0.04&4.12$\pm$ 0.03&4.03$\pm$0.03&$\mathbf{ 3.88 \pm 0.06 }$ &-2.89 $\pm$ 0.02 &-2.80 $\pm$ 0.05&-2.82$\pm0.01$&$\mathbf{ -2.78 \pm 0.01 }$\\
			Winequality &0.65 $\pm$ 0.01 &0.64 $\pm$ 0.02& 0.61$\pm$0.01&$\mathbf{0.57 \pm 0.03}$ &-0.98 $\pm$ 0.01 &-0.97 $\pm$ 0.01& -0.93$\pm$0.01&$\mathbf{-0.87 \pm 0.04}$ \\
			Yacht&6.89 $\pm$ 0.67 & 1.02 $\pm$ 0.05&0.86$\pm$0.05&$\mathbf{ 0.56 \pm 0.16 }$ &-3.43 $\pm$ 0.16 &-1.63 $\pm$ 0.02&-1.23$\pm$0.04&$\mathbf{ -0.99 \pm 0.15 }$\\
			Protein&  4.84 $\pm$ 0.03 & 4.73 $\pm$ 0.01&4.61$\pm$0.01 & $\mathbf{4.24 \pm 0.02}$ & -2.99 $\pm$ 0.01 & -2.97 $\pm$ 0.00& -2.95$\pm$0.00& $\mathbf{-2.88 \pm 0.01}$\\
			YearPredict&9.03 $\pm$ NA & 8.88 $\pm$ NA& 8.68$\pm$ NA&$\mathbf{8.66 \pm NA}$ &-3.62 $\pm$ NA &-3.60$\pm$NA&-3.58 $\pm$ NA & $\mathbf{ -3.57 \pm NA }$\\
			\hline
		\end{adjustbox}
		\caption{Averaged predictions with standard deviations in terms of RMSE and log-likelihood on test sets.}
		\label{tab:reg1}
	\end{table*}
	
	\subsection{Toy example in a multi-goal environment}
	We use the similar toy example as in soft $Q$-learning to show that our proposed , where the environment is defined as a multi-modal distribution, 
	
	Figure \ref{fig:multi-goal} illustrates a 2D multi-goal environment. The left one shows trajectories from a policy learned with DP-WGF-V. The x and y axes correspond to 2D positions (states). The agent is initialized near the origin, and the first step of trajectory is omitted. Red dots are depicted goals and the environment is terminated once the distance between the agents and some goal meets predefined threshold. The level curves show the distance to the goal. 
	
	Q-values at three selected states (-2.5, 0), (0, 0), (2.5, 2.5) are presented on the right, depicted by level curves (yellow: high values, red: low values). The x and y axes correspond to 2D velocity (actions) bounded between -1 and 1. Actions sampled from the policy are shown as blue stars. The experiments shows that our methods have the ability to learn multi-goal policies while achieving better stability and rewards than soft-Q learning.
	
	\begin{figure}[t] 
		\centering
		\includegraphics[width=8cm]{./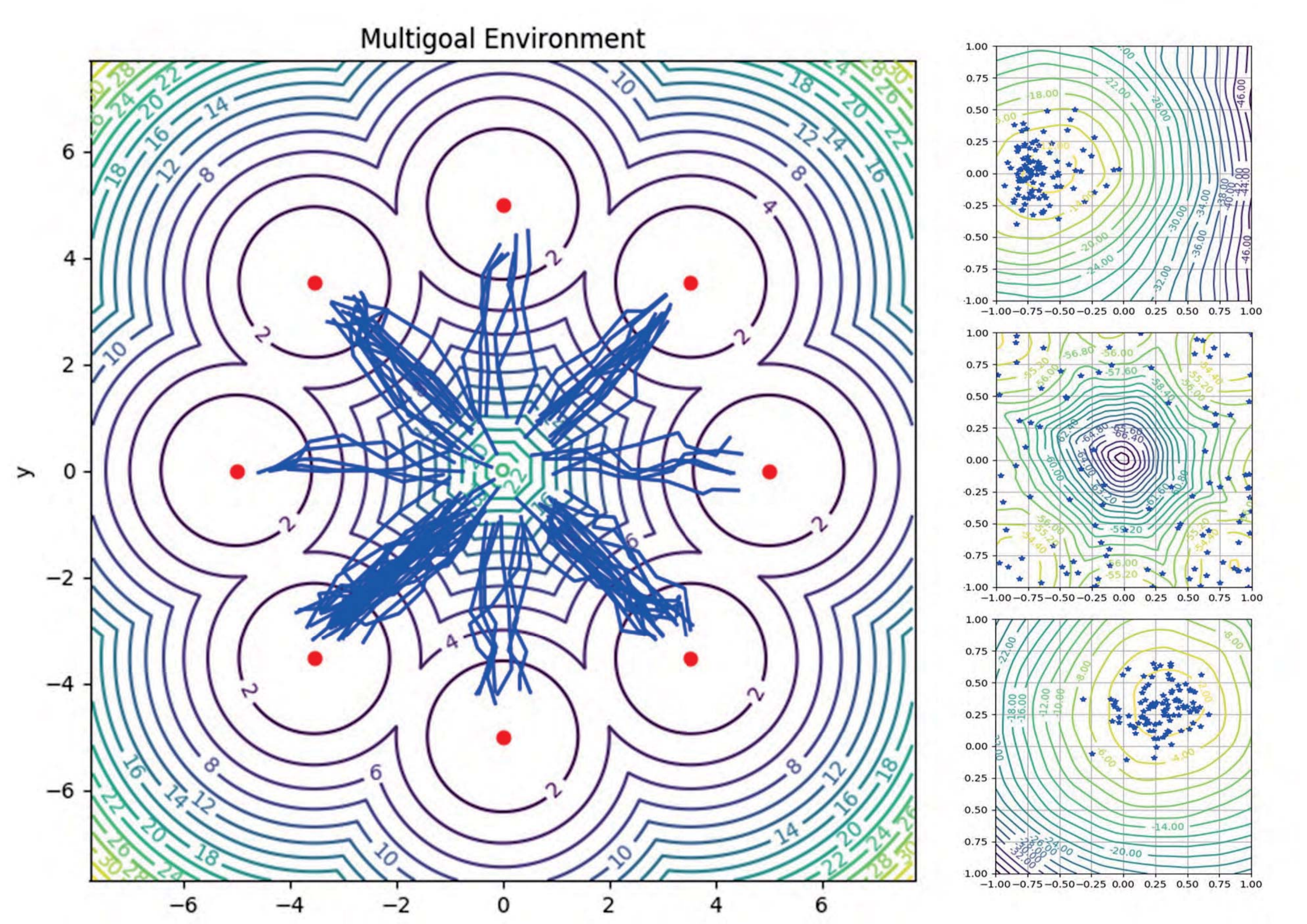}  
		\vspace{-3mm}
		\caption{DP-WGF-V on multi-goal Environment.}
		\label{fig:multi-goal}
	\end{figure}
	
	\subsection{Hyperparameter Sensitivity}
	\begin{figure}[h] \centering
		\begin{tabular}{c}
			\hspace{-7mm}
			\includegraphics[width=6cm]{./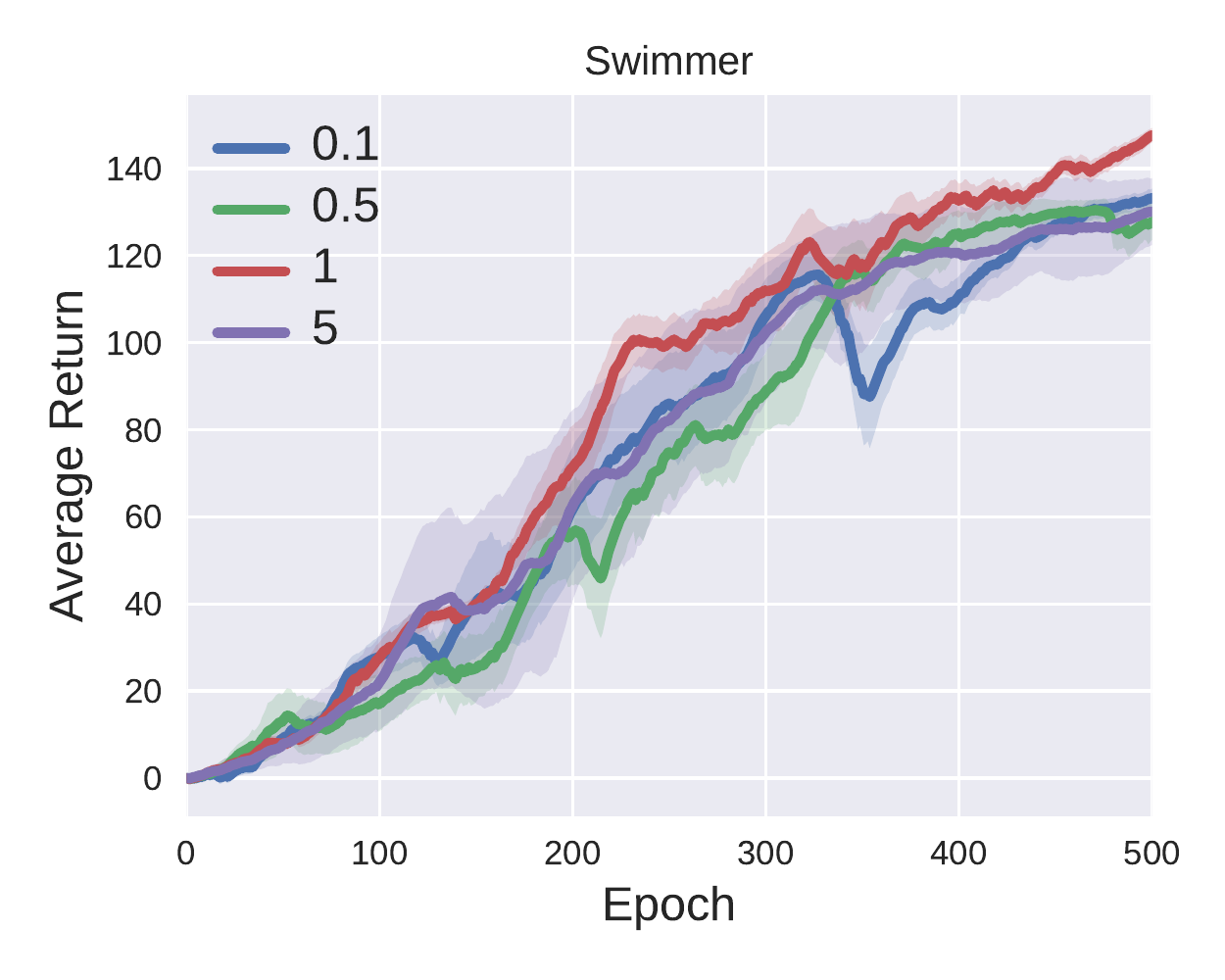}  
		\end{tabular} \vspace{-4mm}
		\caption{{\small Sensitivity of Hyper-parameters}}
		\vspace{-2mm}
		\label{fig:rlSVGD2_w2}
	\end{figure}	
	We further conduct experiments on Swimmer-v1 task to analysis the influence of different Wasserstein-2 scale $\epsilon$.  We run the algorithm for 500 epochs and 5 re-runs. Figure \ref{fig:rlSVGD2_w2}(a) shows the mean of average return against epoch. From the experiments, with appropriate $\epsilon$, the learning curves become more stable and achieves higher final rewards; while too large scale of $\epsilon$ will reduce the final rewards of policy, since the update size is excessively restricted. The results also show that the scale $\epsilon$  of Wasserstein trust-region is not parameter sensitive. 
	
	%

	\section{Implementation Details}
	\subsection{Smoothing previous policy}
	Towards Wasserstein-2 distance, we need to use consecutive to compute policies $W_2^2(\pi^{\overline{\phib}}(\cdot |\sbb_t),\pi^{\phib}(\cdot |\sbb_t))$. For the previous policy $\pi^{\overline{\phib}}(\cdot|\sbb_t)$, there are two strategy to get it. {\RN{1})} policy of last iteration, {\it i.e.} $\overline{\phib}=\phib_{k-1}$. {\RN{2})} moving average of prior policy, {\it i.e.} $\overline{\phib}=(1-\tau)\overline{\phib} + \tau\phib_{k-1}$. Empirically, when the learning curve is stable, (e.g. Half-Cheetah-v1), adopting strategy {\RN{1})} is helpful, and strategy {\RN{2})} will reduce the speed of convergence and may lead lower final rewards; Otherwise, strategy {\RN{2})} will help stabilize the training, and speed up the convergence.
	
	\begin{table}[H]
		\renewcommand{\arraystretch}{1.1}
		\centering
		\caption{Shared parameters of direct policy learning}
		\label{tab:shared_params}
		\vspace{1mm}
		\begin{tabular}{ r l| l }
			\hline
			Parameter & Symbol & Value\\
			\hline
			horizon  &&500\\
			batch size  	&  &   5000    \\
			learning rate &  &$5\!\times\!10^{-3}$ \\
			discount & $\gamma$& 0.99\\
			hidden units && [25, 16] \\					
			variance (prior) && 0.01\\
			temperature &$\alpha$& $\{6, 7, 8, 9, 10, 11\}$\\
			\hline
		\end{tabular}
	\end{table}
	
	\subsection{Indirect Policy learning}
	For the easy task, Cartpole, all agents are trained for 100 episodes. For the two complex tasks, Cartpole Swing-Up and Double Pendulum, all agents are trained up to 1000 episodes. SVPG and IP-WGF shared the same hyperparameters, except the temperature, for which we performed a grid search over $\alpha \in \{6, 7, 8, 9, 10, 11\}$.

	\setcounter{footnote}{0}
	\subsection{Direct-Policy learning}\label{supp:DPL}
	We use OpenAI gym\footnote{https://github.com/openai/baselines}~\cite{openaigym} and rllab\footnote{https://github.com/rll/rllab/tree/master/examples}~\cite{duan2016benchmarking} baselines implementations for TRPO and DDPG. SAC\footnote{https://github.com/haarnoja/sac} and Soft-Q\footnote{https://github.com/haarnoja/softqlearning} implementation are used, and we use recommended parameters.
	
	\paragraph{Hyperparameters}
	\label{app:hypers}
	\autoref{tab:shared_params} lists the common DP-WGF-V, DP-WGF, SAC and Soft-Q parameters used in the comparative evaluation in Figure~\ref{fig:rlSVGD1}, and \autoref{tab:wgf_results} lists the parameters that varied across the environments. For SAC, we use 4 components of mixture Gaussian. For DP-WGF and Soft-Q, 32 particles are used to approximate policy distributions.
	
	\begin{table}[H]
		\renewcommand{\arraystretch}{1.1}
		\centering
		\caption{Shared parameters of indirect policy learning}
		\label{tab:shared_params}
		\vspace{1mm}
		\begin{tabular}{ r l| l }
			\hline
			Parameter & Symbol & Value\\
			\hline
			horizon&&1000\\
			batch size && 64\\
			learning rate & & $3 \cdot 10^{-4}$\\
			discount & $\gamma$ &  0.99\\
			target smoothing coefficient &$\tau$& 0.01\\
			number of layers (3 networks) & & 2\\
			number of hidden units per layer && 128\\
			gradient steps && 1\\
			scale of Wasserstein trust-region && 0.4\\
			\hline
		\end{tabular}
	\end{table}
	\vspace{-0.4in}
	\begin{table}[H]
		\renewcommand{\arraystretch}{1.1}
		\centering
		\caption{Environment Specific Parameters}
		\label{tab:env_params}
		\vspace{1mm}
		\begin{tabular}{ r|l l l  }
			\hline
			Environment 		&DoFs	&Reward Scale  & Replay Pool \\ 
			\hline
			Swimmer 	&2 		& 100          			   & $10^6$\\
			Hopper-v1 			&3		& 1            			   & $10^6$\\
			Walker2d-v1 		&6 		& 3           			   & $10^6$\\
			Humanoid 	&21 	& 3         			   & $10^6$\\
			\hline
		\end{tabular}
	\end{table}
	
	\section{Demos}
	Demos of our framework on a set of RL tasks can be accessed online via:
	\href{https://sites.google.com/view/wgf4rl/}{https://sites.google.com/view/wgf4rl/} .

	\section{Algorithm Details}\label{supp:alg}
	For completeness, we list the detailed algorithms for IP-WGF, DP-WGF and DP-WGF-V in Algorithms~\ref{alg:dpwgf}, \ref{alg:ipwgf} and \ref{alg:dpwgfv}, respectively.
	
	\begin{algorithm}[H]
		\caption{DP-WGF}
		\label{alg:dpwgf}
		\begin{algorithmic}
			\REQUIRE {$\mathcal{D} = \emptyset$; initialize $ \thetab, \phib \sim $ some (prior) distribution. Target parameters: $\overline{\thetab} \leftarrow \thetab$, $\overline{\phib} \leftarrow \phib$}
			\FOR{each epoch}
			\FOR{each t}
			\vspace{1mm}
			\STATE  $\mathtt{\color{blue} \%~Collect~~expereince} $
			\STATE Sample an action $\ab_t$~from policy $\pi^{\phib}(\cdot |\sbb_t)$.
			\STATE Sample next state from the environment: $\sbb_{t+1} \sim p_{\sbb} (\sbb_{t+1}|\sbb_{t}, \ab_{t})$ 
			\STATE Save the new experience in the replay memory: $\mathcal{D} \leftarrow  \mathcal{D}\cup \{\sbb_{t}, \ab_t, r(\sbb_{t}, \ab_t), \sbb_{t+1}\}$ 
			\STATE $\mathtt{\color{blue} }$
			\%~Sample~~from~~the~~replay~~memory 
			\STATE $\{(\sbb_{t}^{(i)}, \ab_t^{(i)}, r_t^{(i)}, \sbb_{t+1}^{(i)})\}^{N}_{i=0} \sim \mathcal{D}$.
			\STATE $\mathtt{\color{blue}  \%~Update~~Q~~function} $
			\STATE Compute empirical values $\hat{V}^{\overline{\thetab}}(\sbb_{t+1}^{(i)})$ 
			\STATE Compute empirical gradient $\hat{\nabla}_{\thetab} J_Q(\thetab)$
			\STATE Update $\thetab$ according to it using ADAM
			\STATE $\mathtt{\color{blue}  \%~Update~~policy} $
			\STATE Compute $W_2^2(\pi^{\overline{\phib}}(\cdot |\sbb_t),\pi^{\phib}(\cdot |\sbb_t))$,
			\STATE Compute empirical gradient $\hat{\nabla}_{\phib} J_\pi(\phib)$
			\STATE Update prior policy parameters: $\overline{\phib} \leftarrow \phib$
			\STATE Update $\thetab$ according to it using ADAM
			\STATE $\mathtt{\color{blue}  \%~Update~~target} $
			\STATE Update~~target~~Q~function~~parameters: 
			\STATE ~~~~~~~~~~$\overline{\thetab} \leftarrow \tau\thetab + (1-\tau)\overline{\thetab}$
			\ENDFOR
			\ENDFOR
		\end{algorithmic}
	\end{algorithm}
	\vspace{-4mm}
	\begin{algorithm}[H]
		\caption{IP-WGF}
		\label{alg:ipwgf}
		\begin{algorithmic}
			\REQUIRE {Initialize policy particles $ \Thetab \sim $ some (prior) distribution as a Bayesian neural network.}
			\FOR{each iteration}
			\STATE Reset FIFO replay pool $R$
			\FOR{each timestep t in episodes}
			\STATE{Sample $\ab_t$ from $\pi_\phi(\cdot|\sbb_t)$}
			\STATE Sample next state from the environment: $\sbb_{t+1} \sim p_{\sbb} (\sbb_{t+1}|\sbb_{t}, \ab_{t})$
			\STATE Save experience in to FIFO replay pool $R$:  
			\FOR{each particles $\thetab^{(i)}\in\Thetab$}
			\STATE Compute $W_2^2(\overline{\Thetab}^{(i)}, \Thetab^{(i)})$
			\STATE Compute empirical gradient $\nabla_{\thetab^{(i)}}J(\pi_{\thetab^{(i)}})$
			\STATE Save current particles $\overline{\thetab}^{(i)}\leftarrow\thetab^{(i)}$
			\STATE Update policy particle $\thetab^{(i)}$
			\ENDFOR
			\ENDFOR
			\ENDFOR
		\end{algorithmic}
	\end{algorithm}

	\begin{algorithm}[H]
		\caption{DP-WGF-V}
		\label{alg:dpwgfv}
		\begin{algorithmic}
			\REQUIRE {$\mathcal{D} = \emptyset$; initialize $ \thetab, \phib, \psib \sim $ some (prior) distribution. Target parameters: $\overline{\thetab} \leftarrow \thetab$, $\overline{\phib} \leftarrow \phib$}
			\FOR{each epoch}
			\FOR{each t}
			\vspace{1mm}
			\STATE  $\mathtt{\color{blue} \%~Collect~~expereince} $
			\STATE Sample an action $\ab_t$~from policy $\pi^{\phib}(\cdot |\sbb_t)$.
			\STATE Sample next state from the environment: $\sbb_{t+1} \sim p_{\sbb} (\sbb_{t+1}|\sbb_{t}, \ab_{t})$ 
			\STATE Save the new experience in the replay memory: $\mathcal{D} \leftarrow  \mathcal{D}\cup \{\sbb_{t}, \ab_t, r(\sbb_{t}, \ab_t), \sbb_{t+1}\}$ 
			\STATE $\mathtt{\color{blue} 
				\%~Sample~~from~~the~~replay~~memory} $
			\STATE $\{(\sbb_{t}^{(i)}, \ab_t^{(i)}, r_t^{(i)}, \sbb_{t+1}^{(i)})\}^{N}_{i=0} \sim \mathcal{D}$.
			\STATE $\mathtt{\color{blue}  \%~Update~~Q~function} $ 
			\STATE Compute empirical gradient $\hat{\nabla}_{\thetab} J_Q(\thetab)$
			\STATE Update $\thetab$ according to it using ADAM
			\STATE $\mathtt{\color{blue}  \%~Update~~value~function} $ 
			\STATE Compute empirical gradient $\hat{\nabla}_{\psib} J_V(\psib)$
			\STATE Update $\psib$ according to it using ADAM
			\STATE $\mathtt{\color{blue}  \%~Update~~policy } $
			\STATE Compute $W_2^2(\pi^{\overline{\phib}}(\cdot |\sbb_t),\pi^{\phib}(\cdot |\sbb_t))$,
			\STATE Compute empirical gradient $\hat{\nabla}_{\phib} J_\pi^\phi$
			\STATE Update prior policy parameters: $\overline{\phib} \leftarrow \phib$
			\STATE Update $\phib$ according to it using ADAM
			\STATE $\mathtt{\color{blue}  \%~Update~~target} $
			\STATE Update target value parameters: 
			\STATE ~~~~~~~~~~$\overline{\psib} \leftarrow \tau\psib + (1-\tau)\overline{\psib}$
			
			\ENDFOR
			\ENDFOR
		\end{algorithmic}
	\end{algorithm}
\end{document}